
\documentclass{article}

\usepackage{microtype}
\usepackage{xcolor,colortbl}
\usepackage{graphicx}
\usepackage{subcaption}
\usepackage{booktabs} 





\usepackage[accepted]{./icml2026/icml2026}

\usepackage{amsmath}
\usepackage{amssymb}
\usepackage{mathtools}
\usepackage{amsthm}


\theoremstyle{plain}
\newtheorem{theorem}{Theorem}[section]
\newtheorem{proposition}[theorem]{Proposition}
\newtheorem{lemma}[theorem]{Lemma}
\newtheorem{corollary}[theorem]{Corollary}
\theoremstyle{definition}
\newtheorem{definition}[theorem]{Definition}

\theoremstyle{remark}
\newtheorem{remark}[theorem]{Remark}

\usepackage[textsize=tiny]{todonotes}

\PassOptionsToPackage{sort&compress,numbers}{natbib}

\usepackage{caption}
\usepackage{varwidth}
\usepackage{graphicx}
\usepackage{amssymb}
\usepackage{amsthm}
\usepackage{bm}
\usepackage{amssymb}  
\usepackage{pifont} 
\usepackage{soul}
\usepackage{gensymb}
\usepackage{enumerate}
\usepackage{booktabs}
\usepackage{multirow}
\usepackage{graphicx}
\usepackage{wrapfig}

\usepackage[dvipsnames]{xcolor} 
\usepackage[colorlinks=true, linkcolor=BrickRed, urlcolor=Blue, citecolor=Blue, anchorcolor=blue, backref=page]{hyperref}
\renewcommand*{\backref}[1]{}
\renewcommand*{\backrefalt}[4]{%
    \ifcase #1%
          \or [Cited on p.~#2.]%
          \else [Cited on p.~#2.]%
    \fi%
    }

\newcommand{\sd}[1]{\tiny\textcolor{black!60}{$\pm$#1}}
\newcommand{\ab}{\mathbf{a}}

\newcommand{\cbb}{\mathbf{c}}
\newcommand{\db}{\mathbf{d}}
\newcommand{\eb}{\mathbf{e}}
\newcommand{\fb}{\mathbf{f}}
\newcommand{\gb}{\mathbf{g}}
\newcommand{\hb}{\mathbf{h}}

\newcommand{\vb}{{\bm{v}}}
\newcommand{\wb}{\mathbf{w}}
\newcommand{\xb}{\mathbf{x}}

\newcommand{\zb}{\mathbf{z}}

\newcommand{\Ab}{\mathbf{A}}
\newcommand{\Bb}{\mathbf{B}}

\newcommand{\Db}{\mathbf{D}}

\newcommand{\Mb}{\mathbf{M}}

\newcommand{\Tb}{\mathbf{T}}


\newcommand{\Zcal}{\mathcal{Z}}

\newcommand{\Zid}{\mathcal{Z}_\textnormal{ID}}

\newcommand{\Xid}{\mathcal{X}_\textnormal{ID}}

\newcommand{\Zood}{\mathcal{Z}_\textnormal{OOD}}

\newcommand{\Xood}{\mathcal{X}_\textnormal{OOD}}

\newcommand{\EE}{\mathbb{E}} 
\newcommand{\NN}{\mathbb{N}} 
\newcommand{\RR}{\mathbb{R}} 
 %
 %

\newcommand\restr[2]{{
  \left.\kern-\nulldelimiterspace 
  #1 
  \littletaller 
  \right|_{#2} 
  }}

\newcommand{\littletaller}{\mathchoice{\vphantom{\big|}}{}{}{}}


\newcommand*{\alphab}{\bm{\alpha}}
\newcommand*{\betab}{\bm{\beta}}

\newcommand*{\lambdab}{\bm{\lambda}}
\newcommand*{\Lambdab}{\bm{\Lambda}}

\newcommand*{\gammab}{\bm{\gamma}}

\newcommand*{\phib}{\bm{\phi}}







\usepackage{comment}
\usepackage{amsthm,thmtools,thm-restate}
\RequirePackage{amsmath}
\usepackage{cleveref}
\RequirePackage{amssymb}
\RequirePackage{mathtools}
\ifx\proof\undefined 
\RequirePackage{amsthm}
\fi
\crefname{figure}{Fig.}{Figs.}
\crefname{definition}{Defn.}{Defns.}
\crefname{corollary}{Cor.}{Cors.}
\crefname{proposition}{Prop.}{Props.}
\crefname{theorem}{Thm.}{Thms.}
\crefname{remark}{Remark}{Remarks}
\crefname{principle}{Principle}{Principles}
\crefname{lemma}{Lemma}{Lemmata}
\crefname{claim}{Claim}{Claims}
\crefname{table}{Tab.}{Tabs.}
\crefname{section}{\S}{\S\S}
\crefname{subsection}{\S}{\S\S}
\crefname{subsubsection}{\S}{\S\S}
\crefname{assumption}{Asm.}{Asms.}
\crefname{appendix}{Appx.}{Appx.}
\crefname{equation}{Eq.}{Eqs.}
\crefname{example}{Example}{Examples}

\numberwithin{equation}{section}

\ifx\BlackBox\undefined
\newcommand{\BlackBox}{\rule{1.5ex}{1.5ex}}  
\fi

\ifx\QED\undefined
\def\QED{~\rule[-1pt]{5pt}{5pt}\par\medskip}
\fi

\ifx\proof\undefined
\newenvironment{proof}{\par\noindent{\bf Proof\ }}{\hfill\BlackBox\\[2mm]}
\fi
\ifx\proofsketch\undefined

\fi

\theoremstyle{plain} 
\ifx\theorem\undefined
\newtheorem{theorem}{Theorem}
\numberwithin{theorem}{section}
\fi
\ifx\property\undefined

\fi
\ifx\corollary\undefined
\newtheorem{corollary}[theorem]{Corollary}
\fi
\ifx\lemma\undefined
\newtheorem{lemma}[theorem]{Lemma}
\fi
\ifx\proposition\undefined

\fi
\ifx\assum\undefined

\fi

\theoremstyle{definition} 
\ifx\definition\undefined
\newtheorem{definition}[theorem]{Definition}
\fi

\theoremstyle{remark} 
\ifx\remark\undefined
\newtheorem{remark}[theorem]{Remark}
\fi
\ifx\example\undefined

\fi
\ifx\lemma\undefined
\newtheorem{lemma}[theorem]{Lemma}
\fi
\ifx\conjecture\undefined

\fi
\ifx\fact\undefined

\fi
\ifx\claim\undefined

\fi
\ifx\assum\undefined

\fi

\DeclareMathOperator*{\argmin}{arg\,min}

\renewcommand{\mathbf}{\bm}

\usepackage{thm-restate}

\usepackage[toc,page,header]{appendix}
\usepackage{minitoc}

\newcommand{\changelinkcolor}[1]{\hypersetup{linkcolor=#1}}  
\AtBeginDocument{%
  
}
  

\usepackage{amsmath,amsfonts,bm}









\def\eqref#1{equation~\ref{#1}}









\def\1{\bm{1}}

\def\eps{{\epsilon}}











\DeclareMathAlphabet{\mathsfit}{\encodingdefault}{\sfdefault}{m}{sl}
\SetMathAlphabet{\mathsfit}{bold}{\encodingdefault}{\sfdefault}{bx}{n}











\newcommand{\R}{\mathbb{R}}




\newcommand{\mc}{\mathcal}

\newcommand{\bs}{\boldsymbol}

\newcommand{\id}{\mathrm{Id}}

\newcommand{\Fadd}{\mathcal{F}_{\mathrm{add}}}

\newcommand{\Fenc}{\mathcal{G}_\textnormal{enc}}
\newcommand{\Fint}{\mathcal{F}_\textnormal{int}}

\newcommand{\Gint}{\mathcal{G}_\textnormal{int}}

\newcommand{\diag}{\mathrm{Diag}}

\usepackage{authblk}

\hypersetup{linkcolor=black}

\hypersetup{linkcolor=black}
\icmltitlerunning{Is Generation Required for Data-Efficient Perception?}

\begin{document}

\twocolumn[
  \icmltitle{Is Generation Required for Data-Efficient Perception?}



  \icmlsetsymbol{equal}{*}

\begin{icmlauthorlist}
  \icmlauthor{Jack Brady}{w,x,y}
  \icmlauthor{Bernhard Schölkopf}{w,x,y}
  \icmlauthor{Thomas Kipf}{z}
  \icmlauthor{Simon Buchholz}{equal,w,x}
  \icmlauthor{Wieland Brendel}{equal,w,x,y}
\end{icmlauthorlist}
 
  \icmlaffiliation{w}{Max Planck Institute for Intelligent Systems, Tübingen, Germany}
  \icmlaffiliation{x}{Tübingen AI Center}
  \icmlaffiliation{y}{ELLIS Institute, Tübingen}
  \icmlaffiliation{z}{Google Deepmind}

\icmlcorrespondingauthor{Jack Brady, Simon Buchholz}{first.last@tue.mpg.de}

  \vskip 0.3in
]



\printAffiliationsAndNotice{\textsuperscript{*}Shared last author}  

\begin{abstract}
It has been hypothesized that achieving the data efficiency of human visual perception requires a \emph{generative} approach in which internal representations result from inverting a \emph{decoder}. Yet today’s most successful vision models are \emph{non-generative}, relying on an \emph{encoder} that maps images to representations without decoder inversion. This raises the question of whether generation is necessary for data-efficient machine perception. To address this, we study to what extent generative and non-generative methods can achieve \emph{compositional generalization}, a hallmark of human data efficiency. Under a compositional generative process, we formally characterize the inductive biases required for compositional generalization in decoder-based (generative) and encoder-based (non-generative) methods. We show theoretically that the inductive biases required for an encoder are substantially more complex and generally infeasible to impose explicitly through architectural constraints or regularization. By contrast, the decoder biases take a simple form that can be enforced directly. These results suggest that compositional generalization may be substantially easier to achieve through a generative paradigm of learning and inverting a decoder rather than learning an encoder directly. We examine our theoretical findings empirically by training a range of generative and non-generative methods on synthetic image data. We find that non-generative methods often fail to generalize compositionally and require large-scale pretraining to improve generalization. By comparison, generative methods yield gains in generalization without requiring additional data.
\end{abstract}

\hypersetup{linkcolor=BrickRed}
\begin{figure*}[t]
    \centering
    \includegraphics[width=\textwidth]{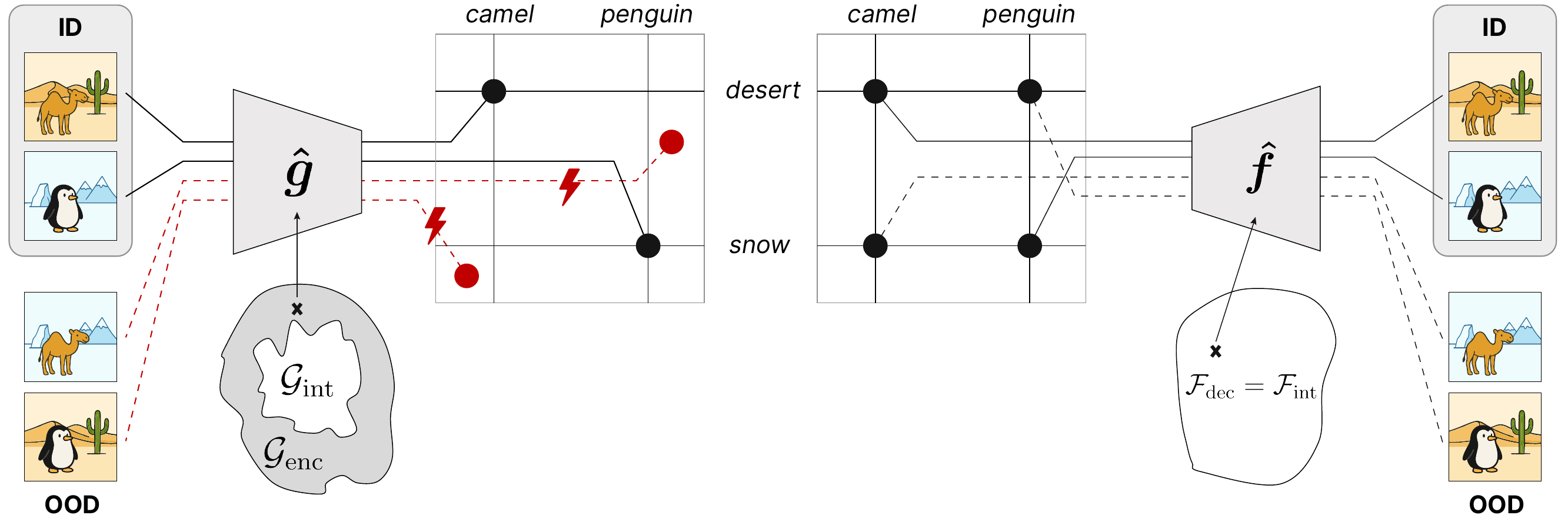}
    \caption{\looseness-1 \small \textbf{Generative vs. non-generative compositional generalization.} We assume in-domain (ID) and out-of-domain (OOD) images arise from a latent variable model through an unknown compositional generator $\fb \in \mathcal{F}_\textnormal{int}$, with inverse $\gb \in \mathcal{G}_\textnormal{int}$. In this setting, achieving compositional generalization for a generative approach requires inductive biases on a \emph{decoder} such that $\hat \fb \in \mathcal{F}_\textnormal{int}$, and for a non-generative approach, an encoder such that $\hat \gb \in \mathcal{G}_\textnormal{int}$ (Sec.~\ref{sec:setup}). We show theoretically in (Sec.~\ref{sec:theory}) that imposing such biases on a decoder is much simpler than for an encoder. Empirically, this tends to manifest in an encoder yielding incorrect representations for OOD images (Sec.~\ref{sec:empir_res}). In contrast, a decoder is able to correctly generate such images enabling compositional generalization through inversion  (Sec.~\ref{sec:search_replay},~\ref{sec:empir_res_gen}).}
    \label{fig:header}
\end{figure*}
\section{Introduction}
\label{sec:introduction}
Perceiving the visual world requires an internal representation of visual input. Two opposing views exist for how such representations should be constructed. The \textbf{generative} view posits that representations result from inverting a \emph{decoder} to recover the latent variables that generate an image~\citep{Helmholtz1867, friston2007free, hinton2007recognize,olshausen2014perception}. Conversely, the \textbf{non-generative} view holds that representations result from an \emph{encoder} that directly maps images to latent variables without inverting a decoder~\citep{Yamins2014PerformanceoptimizedHM,lecun2022path,Gibson1979TheEA}. A core problem in AI is to understand which of these paradigms should be adopted to build machines with \emph{human-level visual perception}.

In recent years, consensus around this problem has shifted, following breakthroughs in non-generative methods for representation learning~\citep{radford2021learning,caron2021emerging, tschannen2025siglip, oquab2024dinov}. These methods, trained with self- or weak supervision, now enable unprecedented performance on perceptual tasks such as object recognition~\citep{simeoni2025dinov3} and image captioning~\citep{fan2025scaling,beyer2024paligemma}. This progress has given rise to a common assumption that non-generative methods provide the most promising path toward human-level visual perception, while generative approaches are not necessary~\citep{balestriero2024how}.

Yet, despite their remarkable performance, current non-generative methods fall short in another key pillar of human visual perception: \emph{data efficiency}. Specifically, these methods rely on web-scale datasets in which different visual concepts are encountered across diverse contexts, with high frequency~\citep{udandarao2024no}, and often with language supervision~\citep{zhang2024vision}. In contrast, human children achieve robust visual perception through much more constrained data, encountering concepts only a handful of times, mainly in the same settings (e.g., the home), and with little supervision~\citep{tenenbaum2011how,lake2017building}. To reach this level of data efficiency, it has been conjectured across several disciplines~\citep{lake_human,kilbertus2018generalization,peters2024does} that generative methods may play a central role. This raises a key question: Do generative methods offer a fundamental advantage for achieving human-level data efficiency or is the poor data efficiency of non-generative methods simply a limitation of current approaches rather than of the non-generative paradigm itself?

In this work, we investigate this question by analyzing what is required for generative and non-generative methods to achieve \emph{compositional generalization}, a core component of human data efficiency~\citep{FODOR19883,greff2020binding}. Compositional generalization is the ability to perceive out-of-domain (OOD) scenes containing unseen combinations of concepts, e.g., a penguin in the desert after only seeing penguins in the snow~(\cref{fig:header}). Since these scenes are absent from the training distribution, achieving compositional generalization is ultimately a problem of imposing the correct inductive biases on a model. We thus compare the inductive biases required for compositional generalization in generative and non-generative methods.

\textbf{Theoretical Analysis.} 
To do this, we introduce a theoretical framework in which images are assumed to be generated by a compositional latent variable model~\citep{brady2025interaction} (Sec.~\ref{sec:setup}). Within this setting, we can precisely characterize the inductive biases required for compositional generalization in generative (decoder-based) and non-generative (encoder-based) methods. We show that while both classes of methods can, in principle, achieve compositional generalization, the required biases differ substantially in complexity. For generative methods, the inductive biases are data-independent and always take the form of a simple block-diagonal structure on the derivative matrix of the decoder. In contrast, for non-generative methods, the corresponding encoder structure is defined only relative to the tangent space of the observed data manifold (Sec.~\ref{sec:encoder_res}). Because this tangent space depends on the geometry of the data and is not known outside the training distribution, the required encoder biases are more complex and ill-posed to enforce explicitly. These results suggest that the generative approach of learning and inverting a decoder may provide a much simpler route to compositional generalization than imposing the corresponding inductive biases directly on an encoder.

\textbf{Empirical Analysis.} To investigate these  findings empirically, we first show how decoder inversion can be performed efficiently both in-domain (ID), via an autoencoder, and out-of-domain (OOD), via gradient-based search (Sec.~\ref{sec:gb_search}) and generative replay (Sec.~\ref{sec:gen_replay}). In Sec.~\ref{sec:experiments}, we then evaluate a range of generative and non-generative methods on photorealistic synthetic image datasets~\citep{bordes2023pug} across different ID and OOD splits. We find that, across a variety of inductive biases, non-generative methods generally exhibit poor OOD performance when trained from scratch and improvements require larger-scale pretraining. By comparison, generative methods can
achieve significant improvements OOD, without requiring additional data, through inductive biases on a decoder along with search and replay.
\section{Problem Setup}
\label{sec:setup}
In this section, we first provide formal definitions for generative and non-generative perception, as well as compositional generalization. We then define a structured latent variable model that allows us to precisely characterize the inductive biases required for compositional generalization in both classes of methods.

\textbf{Perception.} 
We begin by formalizing \emph{visual perception}. To this end, we assume that images $\xb \in \mathcal{X} \subset \mathbb{R}^{d_x}$ arise from a latent variable model. Specifically, we assume $\xb$ is generated from a latent vector $\zb \in \mathcal{Z} := \mathbb{R}^{d_z}$ by a diffeomorphic generator $\fb: \mathbb{\mathcal{Z} \to \mathcal{X}}$, i.e., $\xb = \fb(\zb)$. Visual concepts in $\xb$ (e.g. ``camel'' and ``desert'' in Fig. \ref{fig:header}) are modelled as $K$ distinct \emph{slots} of latents $\zb_{k} \in \mathbb{R}^{m}$ such that $\zb=(\zb_{1}, ..., \zb_{K})$ ~\citep{brady2025interaction}. Now, assume we have a representation of an image $\hat \zb = \phi(\xb)$, where $\phi: \mathbb{R}^{d_x} 
\to \Zcal$. We define perception as the ability to invert the generator $\fb$ via $\phi$ to recover the slots $\zb_{k}$ that generated $\xb$. In general, recovering $\zb_{k}$ exactly is impossible. Thus, we only require that $\phi$ inverts $\fb$ up to re-parameterizations and permutations of the slots. Formally, let $\hb_{\pi}$ be a function composed of slot-wise bijections $\hb_{k} : \mathbb{R}^{m} \to \mathbb{R}^{m}$ and permutations $\pi$, i.e., $\hb_{\pi}(\zb) := \{ \hb_k(\zb_{\pi(k)}) \}_{k=1}^K$. Perception on a set $\Zcal^{S} \subseteq \Zcal$ requires that there exist an $\hb_{\pi}$ such that
\begin{equation}\label{eq:perception}
\forall \zb \in \Zcal^{S},\,  \; \phi(\fb(\zb))  = \hb_{\pi}(\zb).
\end{equation}
\cref{eq:perception} takes the perspective of perception as an inverse problem~\citep{tenenbaum2011how}, but with respect to the ground-truth generator $\fb$. This contrasts with a task-based view~\citep{yamins2016using} where perception is defined with respect to solving a downstream task. We note that the task-based view can be framed as a special case of~\cref{eq:perception}, by treating task-specific predictions as the latent variables to be recovered by $\phi$. Moreover, if a representation satisfying~\cref{eq:perception} is learned, downstream tasks such as object classification can be solved via a simple readout applied independently to each inferred slot~$\hat \zb_{k}$ (see Sec.~\ref{sec:experiments}).

\textbf{Generative and non-generative approaches.}
We now characterize the \emph{generative} and \emph{non-generative} approaches to perception. For the generative approach, representations are obtained by inverting a learned \emph{decoder} $\hat \fb:\Zcal \to \mathbb{R}^{d_{x}}$, i.e., $\phi(\xb) = \hat \fb^{-1}(\xb)$. For this to satisfy \cref{eq:perception}, the decoder $\hat \fb$ must \emph{identify} the ground-truth generator $\fb$ such that for $\zb\in \mc{Z}$
\begin{equation}\label{eq:dec_req}
\hat\fb(\hb_{\pi}(\zb)) = \fb(\zb)\,.
\end{equation}
Alternatively, for the non-generative approach, 
a representation is defined as $\phi(\xb) = \hat \gb(\xb)$, where $\hat \gb:\mathbb{R}^{d_{x}} \to \Zcal$ is a learned \emph{encoder} which \emph{is not} constructed to invert a decoder $\hat \fb$. For this to satisfy~\cref{eq:perception}, $\hat \gb$ must identify the inverse generator $\gb := \fb^{-1}$  such that for $\xb\in \mc{X}$
\begin{equation}\label{eq:enc_req}
\hat\gb(\xb) =\hb_{\pi}(\gb(\xb)).
\end{equation}
We emphasize that the difference between generative and non-generative approaches is not whether an encoder or decoder is used. Instead, it is whether a representation satisfying~\cref{eq:perception} is obtained by learning an approximation $\hat\fb$ of the ground-truth generator~(\cref{eq:dec_req}) and then inverting this model, or by learning an approximation $\hat\gb$ of the inverse generator directly~(\cref{eq:enc_req}). 

\begin{figure}
  \begin{center}
\includegraphics[height=2.5cm] {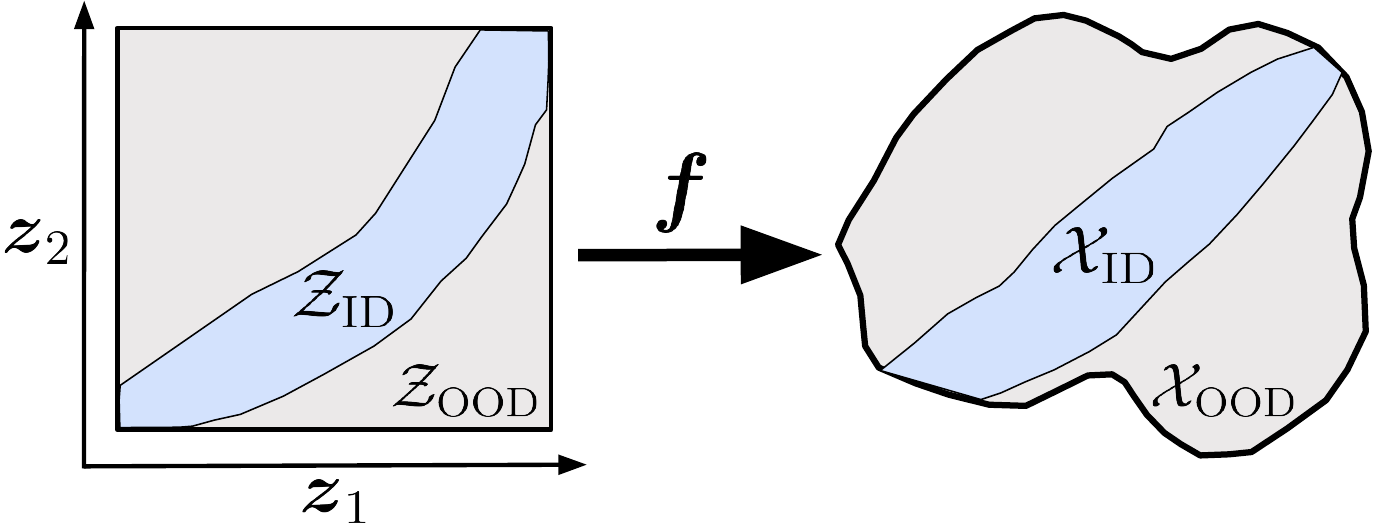}
\end{center}
\caption{Visualization of a data generating process with in- and out-of-domain regions.}
\label{fig:dgp}
\end{figure}

\textbf{Compositional generalization.}
We now formalize  \emph{compositional generalization}. Informally, compositional generalization is the ability to perceive OOD images containing unseen concept combinations (e.g. ``penguin'' and ``desert'' in Fig. \ref{fig:header}). To formalize this, we assume observed images $\Xid \subset \mathcal{X}$ arise from only a subset of possible concept combinations $\mathcal{Z}_\textnormal{ID} \subset \mathcal{Z}$, i.e., $\mathcal{X}_\textnormal{ID} := \fb(\mathcal{Z}_\textnormal{ID})$~(see \cref{fig:dgp}). OOD concept combinations $\Zood$ are defined as the set of all unseen combinations of slots
\begin{align}\label{eq:ood_combinations}
\begin{split}
        \Zood &:= \{\Zcal_{1} \times \Zcal_{2} \times \dots \times \Zcal_{K}\,\} \setminus \Zid\,\qquad \text{with} 
        \\
        \Zcal_{k} &:= \{\zb_{k} \in \mathbb{R}^{m} \mid \zb \in \Zid \},
        \end{split}
    \end{align}
which give rise to OOD images $\Xood := \fb(\Zood)$~(\cref{fig:dgp}). Assuming~\cref{eq:perception} is satisfied on $\Zid$, compositional generalization is achieved if~\cref{eq:perception} is also satisfied OOD, for all $\zb \in \Zood$.

\textbf{The problem of OOD identifiability.} Without restrictions on the ground-truth generator $\fb$, compositional generalization, using a generative or non-generative approach, is an ill-posed problem. Specifically, consider two generators $\fb^{1}, \fb^{2} \in \mathcal{F}$ such that on $\Zid$, $\fb^{1}\circ\hb_{\pi} = \fb^{2}$, but on $\Zood$, $\fb^{1}\circ\hb_{\pi} \neq \fb^{2}$. In this case, identifying the ground-truth $\fb$ or its inverse, out-of-domain, is impossible since there is no way to distinguish these generators from in-domain data. Thus, for compositional generalization to be possible, $\fb$ must belong to a function class $\mathcal{F}$ such that for all $\fb^{1}, \fb^{2} \in \mathcal{F}$
\begin{align}\label{eq:ood_ident_gen}
\begin{split}
    &\forall \zb \in \Zid, \; \fb^{1}(\hb_{\pi}(\zb)) = \fb^{2}(\zb) 
   \\
   &\;\;\Longrightarrow\;\;
    \forall \zb \in \Zood, \; \fb^{1}(\hb_{\pi}(\zb)) = \fb^{2}(\zb),
    \end{split}
\end{align}%
which equivalently implies that for all inverses $\gb^{1}, \gb^{2} \in \mathcal{G} := \{\fb^{-1}  \mid \fb \in \mathcal{F}\}$
\begin{align}\label{eq:ood_ident_inv_gen}
\begin{split}
  &  \forall \xb \in \Xid, \; \hb_{\pi}(\gb^{1}(\xb)) = \gb^{2}(\xb) 
  \\
 & \;\;\Longrightarrow\;\;
    \forall \xb \in \Xood, \; \hb_{\pi}(\gb^{1}(\xb)) = \gb^{2}(\xb) .
    \end{split}
\end{align}

\textbf{Further assumptions on $\fb$.}
Under our current assumptions, $\mathcal{F}$ consist of all diffeomorphisms from $\Zcal$ to $\mathcal{X}$. This function class is far too large to satisfy~\cref{eq:ood_ident_gen}. Thus, further assumptions are required. Recently, \citet[Thm.\ 4.4]{brady2025interaction} proved  that diffeomorphisms (on their image) with the following form will satisfy~\cref{eq:ood_ident_gen}
\begin{equation}
\label{eq:interact_func}
\fb(\zb)=\sum_{k=1}^K\fb^k\left(\zb_{k}\right) + \sum_{\alphab: |\alphab|\leq n} 
        \cbb_{\alphab} \zb^{\alphab}\,,
\end{equation}
where $n\!\in\!\NN$, $\cbb_{\alphab} \in\RR^{d_x}$, and $\alphab\in\NN_0^{d_z}$ is a \emph{multi-index}.\footnote{\label{footnote:multi_index}A \emph{multi-index} is an ordered tuple $\alphab=(\alpha_1, \alpha_2, ...,\alpha_d)$ of non-negative integers $\alpha_i\in\NN_{0}$, with operations $|\alphab|:=\alpha_1+ \alpha_2+... + \alpha_d$, \; and 
$\zb^{\alphab}:=z_1^{\alpha_1}z_2^{\alpha_2}...\, z_d^{\alpha_d}$. \;} 
This function class, denoted $\mathcal{F}_\textnormal{int}$, was introduced to model concepts with varying degrees of interaction $n$. For example, when $n=1$, the second-sum on the RHS  vanishes and concepts can only interact additively~\citep{lachapelle2024additive}. For $n>1$ concepts can interact explicitly via polynomial functions of components from different slots. This aims to capture more complex concept interactions such as between  objects and backgrounds. Such functions thus offer a flexible model of visual concepts, and are the largest function class shown to satisfy~(\cref{eq:ood_ident_gen}). For these reasons, we assume that ground-truth generators $\fb$ belong to  $\mathcal{F}_\textnormal{int}$, and  inverse generators $\gb$ to $\mathcal{G}_\textnormal{int} := \{\fb^{-1}  \mid \fb \in \mathcal{F}_\textnormal{int}\}$.

\textbf{Inductive biases for compositional generalization.} Under this generative process, we can now formalize what is required to guarantee compositional generalization using both a generative and non-generative approach. To this end, we assume that Equations \ref{eq:dec_req} and~\ref{eq:enc_req} are satisfied in-domain, for a decoder $\hat \fb$ and encoder $\hat \gb$, respectively. Since ground-truth generators $\mathcal{F}_\textnormal{int}$ and inverses $\mathcal{G}_\textnormal{int}$ satisfy Equations~\ref{eq:ood_ident_gen} and~\ref{eq:ood_ident_inv_gen}, compositional generalization is guaranteed in theory by restricting the decoder class out-of-domain such that $\hat\fb \in \mathcal{F}_\textnormal{int}$  and, similarly, the encoder class such that $\hat\gb \in \mathcal{G}_\textnormal{int}$. In practice, however, such 
restrictions on a model's OOD behavior must come from either explicit inductive biases such as architectural design or regularization or implicit biases such as optimization dynamics. Whether it is possible to impose such inductive biases on a decoder to ensure $\hat\fb \in \mathcal{F}_\textnormal{int}$ or on an encoder to ensure $\hat\gb \in \mathcal{G}_\textnormal{int}$ depends critically on the structural properties of these ground-truth function classes.

\section{Theoretical Analysis}\label{sec:theory}
In this section, we theoretically analyze the structure of $\mathcal{F}_\textnormal{int}$ and  $\mathcal{G}_\textnormal{int}$ to understand whether a model can be constrained to these classes with explicit or implicit inductive biases.

\textbf{Structure of $\mathcal{F}_\textnormal{int}$.} Generators in $\mathcal{F}_\textnormal{int}$ are defined as diffeomorphisms which take the form of~\cref{eq:interact_func}.
Consequently, to enforce $\hat\fb \in \mathcal{F}_\textnormal{int}$, we must constrain a decoder to match this form. This can be done in a straightforward manner via architecture design. For example, the first term on the RHS of~\cref{eq:interact_func} can be parameterized as the sum of slot-wise neural networks and the second term using learned coefficients for $\cbb_{\alphab}$. Furthermore, we highlight that functions of the form in~\cref{eq:interact_func} can equivalently be expressed as having block-diagonal derivative matrices $D^{n+1}\fb(\zb)$~\citep{brady2025interaction,lachapelle2024additive}. Specifically, if $n=1$, then the Hessian $D^{2}\fb$ has the structure that for any two slots $\zb_{k}$ and $\zb_{l}$,
\begin{equation}
\label{eq:diag_hessian}
    \forall 1\leq k\neq l \leq K, \quad D_{\zb_k}D_{\zb_l}\fb(\zb)=0.
\end{equation}
For $n>1$, analogous conditions hold for higher-order derivatives~\citep{brady2025interaction}. Thus, we can also enforce that $\hat \fb \in \mathcal{F}_\textnormal{int}$ for a decoder $\hat{\fb}$ via regularization. For example, when $n=1$, we can use the   following regularizer (with similar expressions for higher-order derivatives when $n>1$)
\begin{equation}
\label{eq:decoder_reg}
\mathcal{R}(\hat{\fb},\zb)=\sum_{k\neq l \in [K]}\left\lVert D^{2}_{\zb_k, \zb_l}\hat\fb(\zb)\right\rVert.
\end{equation}
\subsection{Structure of $\mathcal{G}_\textnormal{int}$.}
\label{sec:encoder_res}
We now investigate the structure of inverse generators $\mathcal{G}_\textnormal{int}$. Our goal is to characterize these functions in terms of their derivative structure, analogous to our characterization of $\mathcal{F}_\textnormal{int}$. This is more challenging, however, since inverse generators $\gb \in \mathcal{G}_\textnormal{int}$ do not admit an analytical form similar to~\cref{eq:interact_func}. As a result, understanding their derivative structure requires a more detailed analysis. For simplicity, we present results for $n=1$; similar statements can in principle be derived for higher order derivatives for the case $n>1$. We also study whether we can find architectures with an inductive bias towards $\Gint$, but delegate this to Appendix~\ref{app:architecture}. 

We will first assume that the observed dimension $d_x$ equals the ground-truth latent dimension $d_z$ such that $\mathcal{X} = \mathcal{Z}$. In this case, we show that, similar to generators in $\mathcal{F}_\textnormal{int}$,  inverse generators in $\mathcal{G}_\textnormal{int}$ have a structured Jacobian and Hessian. Specifically, we prove the following result.
\begin{restatable}{lemma}{lehessian}\label{le:second}
    Let $\gb\in \mathcal{G}_\textnormal{int}$
    for $n=m=1$ and $d_x=d_z$. Then $\gb$ has the property that for $\xb\in \mc{X}$
\begin{align}\label{eq:second_inverse_main}
        (D\gb)^{-\top}(\xb) D^2\gb_s (\xb)(D\gb)^{-1}(\xb)\in \diag(d_x)
    \end{align}
    is a diagonal matrix for $s\in [d_z]$. Further, if $\gb$ is a diffeomorphism satisfying~\cref{eq:second_inverse_main} then $\gb \in \mathcal{G}_\textnormal{int}$.
\end{restatable}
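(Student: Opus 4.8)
The plan is to relate the Hessian of the inverse map $\gb=\fb^{-1}$ to the Jacobian and Hessian of $\fb$ via the change‑of‑variables formula for second derivatives, and then read off the diagonal structure from~\cref{eq:diag_hessian}, which for $m=1$ says exactly that every output coordinate $\fb_p$ of a generator $\fb\in\Fint$ has a diagonal Hessian. Concretely, fix $\xb\in\mathcal{X}$, set $\zb:=\gb(\xb)$ and $J:=D\fb(\zb)$, so that $D\gb(\xb)=J^{-1}$ and hence $(D\gb)^{-1}(\xb)=J$, $(D\gb)^{-\top}(\xb)=J^{\top}$. Differentiating the identity $D\gb(\fb(\zb))=\big(D\fb(\zb)\big)^{-1}$ with respect to $\zb$ — using the chain rule on the left, $\partial_{z_b}(J^{-1})=-J^{-1}(\partial_{z_b}J)J^{-1}$ on the right, and the symmetry of Hessians — gives, for every coordinate $s$,
\[
  (D\gb)^{-\top}(\xb)\,D^2\gb_s(\xb)\,(D\gb)^{-1}(\xb)
  \;=\; J^{\top}\,D^2\gb_s(\xb)\,J
  \;=\; -\sum_{p=1}^{d_x}(J^{-1})_{sp}\,D^2\fb_p(\zb).
\]
In one dimension this is the familiar $g''=-f''/(f')^{3}$; in general I would verify it by a short index computation. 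The left‑hand side is precisely the matrix in~\cref{eq:second_inverse_main}, so the forward direction follows at once: if $\gb\in\Gint$ then $\fb=\gb^{-1}\in\Fint$, each $D^2\fb_p(\zb)$ is diagonal by~\cref{eq:diag_hessian}, and a linear combination of diagonal matrices is diagonal.

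For the converse, suppose $\gb$ is a diffeomorphism satisfying~\cref{eq:second_inverse_main} and let $\fb:=\gb^{-1}$. Applying the displayed identity with the roles of $\fb$ and $\gb$ swapped — so that the relevant Jacobian is now $D\gb(\xb)=J^{-1}$ — and rearranging yields, for each $\zb\in\mathcal{Z}$ with $\xb:=\fb(\zb)$,
\[
  D^2\fb_s(\zb)
  \;=\; -\sum_{p=1}^{d_x}J_{sp}\,\Big((D\gb)^{-\top}(\xb)\,D^2\gb_p(\xb)\,(D\gb)^{-1}(\xb)\Big).
\]
Each bracketed matrix is diagonal by hypothesis, so $D^2\fb_s(\zb)$ is diagonal for every $s$, and since $\zb$ ranges over all of $\mathcal{Z}=\mathbb{R}^{d_z}$ as $\xb$ ranges over $\mathcal{X}$, this holds everywhere. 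With $m=1$, a diagonal Hessian of $\fb_s$ means $\partial_{z_k}\partial_{z_l}\fb_s\equiv 0$ whenever $k\neq l$; integrating over $\mathbb{R}^{d_z}$ shows each $\fb_s$ — and hence $\fb$ — decomposes as $\fb(\zb)=\sum_{k=1}^{K}\fb^k(\zb_k)+\cbb_{\zerob}$ once linear terms are absorbed into the $\fb^k$. This is of the form~\cref{eq:interact_func} with $n=1$, and $\fb=\gb^{-1}$ is a diffeomorphism, so $\fb\in\Fint$, i.e.\ $\gb\in\Gint$.

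The only real work is the index bookkeeping in the first displayed identity — keeping the factors $J$, $J^{-1}$, $J^{\top}$, $J^{-\top}$ and the contraction index $p$ straight when differentiating a matrix inverse — together with the elementary fact that a map on $\mathbb{R}^{d_z}$ whose coordinate Hessians are all supported on the diagonal is a sum of functions of the individual coordinates (iterated integration). Everything else is substitution and the already‑established description of $\Fint$ via~\cref{eq:interact_func,eq:diag_hessian}.
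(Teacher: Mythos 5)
Your proposal is correct and follows essentially the same route as the paper: differentiate the identity $\gb\circ\fb=\mathrm{id}$ a second time (you do this by differentiating $D\gb\circ\fb=(D\fb)^{-1}$ once more, which is equivalent) to obtain the relation $J^{\top}D^2\gb_s\,J=-\sum_p(\partial_p\gb_s)\,D^2\fb_p$, identify $(D\gb)^{-1}=D\fb$ when $d_x=d_z$, and read off the diagonal structure from the fact that $\Fint$ with $n=m=1$ coincides with the additive class whose coordinate Hessians are diagonal; the converse follows by the same identity with the roles of $\fb$ and $\gb$ swapped, exactly as in the paper. The only cosmetic difference is that the paper states the key second-derivative relation as a standalone equation \eqref{eq:key_second} before using it, whereas you rederive it in place, but the argument is the same.
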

Thus, when $\mathcal{X} = \mathcal{Z}$, enforcing that $\hat\gb \in \mathcal{G}_\textnormal{int}$ requires constraining an encoder according to~\cref{eq:second_inverse_main}. This is achievable, for instance, through regularization on the derivatives of $\hat\gb$ analogous to~\cref{eq:decoder_reg}.

This setting, however, is not applicable to image data since images typically lie in a manifold embedded in a higher-dimensional ambient space $\mathbb{R}^{d_x}$. We therefore consider the more practical case where $d_x$ is larger than the ground-truth latent dimension $d_z$. Specifically, we assume $d_x \geq d_{z}^3$. In this case, we first prove that the aforementioned structure on $D\gb$ and $D^{2}\gb$ is no longer present. 
\begin{restatable}{theorem}{regularizer}\label{th:regularizer}
    Assume that $d_x\geq d_z^3$.
    Let $\boldsymbol{B}_l\in \R^{d_x\times d_x}$ be
    symmetric matrices for $1\leq l\leq d_z$. 
    Then there is for any $\xb_0\in \R^{d_x}$ and for almost every $\boldsymbol{A}\in \R^{d_z\times d_x}$  a generator $\fb\in \mathcal{F}_\textnormal{int}$ with a (left)-inverse $\gb \in \mathcal{G}_\textnormal{int}$, such that $\fb(0)=\xb_0$ and $D\gb(\xb_0)=\boldsymbol{A}$
    and $D^2\gb_l(\xb_0)=\Bb_l$ for $1\leq l\leq d_z$.
\end{restatable}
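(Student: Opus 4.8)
The plan is to differentiate the left-inverse relation $\gb\circ\fb=\mathrm{id}_{\Zcal}$ twice, reducing the statement to a finite linear-algebra problem about realizing $1$- and $2$-jets, and then to construct $\fb$ and $\gb$ explicitly near $\xb_0$. Set $\Vb:=D\fb(0)\in\R^{d_x\times d_z}$ and $\Gb_a:=D^2\fb_a(0)\in\R^{d_z\times d_z}$, and write $\Ab,\Bb_s$ for the prescribed $D\gb(\xb_0),D^2\gb_s(\xb_0)$. Differentiating $\gb(\fb(\zb))=\zb$ at $\zb=0$ gives $\Ab\Vb=I$ — so $\Vb$ is a right-inverse of $\Ab$, which requires $\Ab$ to have full row rank, as holds for a.e.\ $\Ab$ — and, at second order,
\[
\Vb^{\top}\Bb_s\Vb \;=\; -\sum_{a=1}^{d_x}\Ab_{sa}\,\Gb_a \qquad (s\in[d_z]).
\]
Since $\fb\in\Fint$ with $n=1$ is a sum of slot-wise maps (plus an affine term), every $\Gb_a$ is block-diagonal for the $K$-slot partition of $[d_z]$; conversely, given $\Vb,\Gb_a$ obeying the two identities, the remaining (``normal'') components of a left-inverse's $2$-jet are free, so $\fb$ and $\gb$ with the prescribed jets can be built. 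Hence it suffices to: \emph{(a)} for a.e.\ $\Ab$, find a right-inverse $\Vb$ with $\Vb^{\top}\Bb_s\Vb$ block-diagonal for all $s$; \emph{(b)} solve $-\sum_a\Ab_{sa}\Gb_a=\Vb^{\top}\Bb_s\Vb$ for block-diagonal symmetric $\Gb_a$ — possible since $\Ab$ is surjective, by solving one linear system per in-block entry; and \emph{(c)} realize these jets.

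\textbf{Realizing the jets (step (c)).} For $\fb$, take the compactly supported perturbation $\fb(\zb)=\xb_0+\Vb\zb+\sum_{k}\eps^{2}\psi_k(\zb_k/\eps)$ where $\psi_k:\R^m\to\R^{d_x}$ is a bump with $\psi_k(0)=0$, $D\psi_k(0)=0$ and $(\psi_k)_a$ having Hessian at $0$ equal to the $k$-th diagonal block of $\Gb_a$; for $\eps$ small, $D\fb$ stays close to the full-column-rank $\Vb$, so $\fb$ is an injective proper immersion, hence a diffeomorphism onto its closed embedded image $M:=\fb(\Zcal)$, and lies in $\Fint$. For $\gb$, first form the ``flat normal extension'' $\gb_0$ near $\xb_0$: the $\Zcal$-component of the local inverse of $(\zb,\tb)\mapsto\fb(\zb)+\Wb\tb$, the columns of $\Wb$ being a basis of $\ker\Ab$; then $\gb_0\circ\fb=\mathrm{id}$, $D\gb_0(\xb_0)=\Ab$, and the second-order identity forces $\Vb^{\top}D^2(\gb_0)_s(\xb_0)\Vb=\Vb^{\top}\Bb_s\Vb$. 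Then add a bump $\Delta$ supported near $\xb_0$, vanishing on $M$, with $D\Delta(\xb_0)=0$ and $D^2\Delta_s(\xb_0)=\Bb_s-D^2(\gb_0)_s(\xb_0)$: by Hadamard's lemma, Hessians at $\xb_0$ of functions vanishing on $M$ with vanishing first derivative there range over exactly the symmetric matrices whose $T_{\xb_0}M\times T_{\xb_0}M$ block vanishes, and the target does, by the identity just noted. Finally patch $\gb_0+\Delta$ with $\fb^{-1}\circ r$ away from $\xb_0$ ($r$ a tubular-neighborhood retraction onto $M$) by a partition of unity — a convex combination of left-inverses is a left-inverse — and extend smoothly to all of $\R^{d_x}$.

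\textbf{The main obstacle: step (a).} I would construct $\Vb$ greedily over slots. With columns $\vb_j$ fixed for slots $<k$, the slot-$k$ columns $\vb_i$ must satisfy $\Ab\vb_i=\eb_i$ together with the \emph{linear} constraints $\vb_i^{\top}\Bb_s\vb_j=0$; writing $\vb_i=\Ab^{+}\eb_i+\Kb\xib_i$ with $\mathrm{col}(\Kb)=\ker\Ab$, this is an affine system in $\xib_i\in\R^{d_x-d_z}$ with at most $(k-1)m\,d_z$ equations. Since $(K-1)m\,d_z=K(K-1)m^{2}\le d_z^{2}\le d_x-d_z$ under $d_x\ge d_z^{3}$ (in fact $d_x\ge d_z^{2}+d_z$ already suffices), a generic such system is consistent with a large solution space, leaving room to keep each step in general position. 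The delicate point — where ``a.e.\ $\Ab$'' is needed beyond full rank — is to show the exceptional $\Ab$ form a null set: the pairs $(\Ab,\Vb)$ with $\Ab\Vb=I$ and all off-block entries of $\Vb^{\top}\Bb_s\Vb$ zero form an algebraic incidence set, and it suffices to exhibit one point at which its projection to the $\Ab$-space is a submersion, so that the $\Ab$ with empty fibre lie in a proper subvariety, hence have measure zero. The remaining ingredients — the perturbation estimate making $\fb$ a diffeomorphism, Hadamard's lemma, the partition-of-unity gluing, and the trivial case $d_z=1$ where block-diagonality is vacuous — are routine.
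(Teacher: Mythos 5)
Your overall plan follows the paper's strategy: differentiate $\gb\circ\fb=\mathrm{id}$ to reduce the problem to realizing prescribed $1$- and $2$-jets compatible with the constraints $\Ab\Vb=\id_{d_z}$ and $\Vb^\top\Bb_s\Vb=-\sum_a\Ab_{sa}\Gb_a$, and then build $\fb$ and $\gb$ from the jets. Your step (b) matches the paper's Lemma~\ref{le:techincal_reg2} (the Kronecker-product solve for the $\Lambdab^i$), and your step (c), while more elaborate than the paper's construction, is a valid alternative: the paper takes $\gb$ to be the exact quadratic with the desired $2$-jet plus a correction $\hb\circ\phi$ that vanishes to second order at $\xb_0$, whereas you start from a flat normal extension and correct the $2$-jet by a bump vanishing on $M=\fb(\Zcal)$ via Hadamard's lemma. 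Either works.

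The genuine gap is in step (a), which is where the ``almost every $\Ab$'' claim actually lives, and it is precisely the content of the paper's Lemma~\ref{le:techincal_reg}. You propose a greedy, $\Ab$-dependent construction of the columns $\vb_i$ and then a measure-zero argument for the exceptional $\Ab$ via the algebraic incidence set of pairs $(\Ab,\Vb)$. The asserted inference, ``exhibit one point at which the projection to $\Ab$-space is a submersion, so the $\Ab$ with empty fibre lie in a proper subvariety,'' does not follow as stated: a submersion at one point gives an open set of $\Ab$'s hit by the projection, but passing from ``contains an open set'' to ``its complement is contained in a proper subvariety'' requires additional structure -- irreducibility of the relevant component and dominance of the projection, via Chevalley's theorem -- none of which you establish. (The paper's Remark following Lemma~\ref{le:techincal_reg} shows this is not a vacuous worry: there really is a nontrivial null set of full-rank $\Ab$ for which no valid $\Vb$ exists, so some genericity argument is unavoidable.) The paper sidesteps the issue entirely by first constructing $\Ab$-\emph{independent} subspaces $V_1,\dots,V_{d_z}$ that are mutually ``$\Bb_s$-orthogonal'' for all $s$ (this is where $d_x\geq d_z^3$ is used, via a dimension count on $\ker\Tb_j$), and only afterwards bringing in $\Ab$: for each $i$, the existence of $\wb^i\in V_i$ with $\Ab\wb^i=\eb^i$ reduces to the invertibility of a fixed $d_z\times d_z$ matrix whose entries are linear in $\Ab$, i.e.\ a single determinant being nonzero. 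That is a polynomial condition not identically zero, hence its zero set is null, and a finite union over $i$ remains null. If you keep your greedy column construction, you would need to carry out that algebraic-geometry argument carefully; decoupling the subspace construction from $\Ab$, as the paper does, is the cleaner route.
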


Thus, when $d_x \gg d_z$, $D^2\gb$ and $D\gb$ can be arbitrary matrices (up to a set of measure 0). We emphasize that this result applies to $\Fint$ with arbitrary interaction degree $n\geq 1$ and any slot dimensions. 
However, the structure expressed in~\cref{eq:second_inverse_main} does not vanish entirely from $\gb$. Instead, it persists, but only for the restriction of $\gb$ to the data manifold $\mathcal{X}$. Specifically, the constraint~\cref{eq:second_inverse_main} holds more generally for $n=m=1$ when $D\gb$ is projected on the tangent space $T_{\xb}\mathcal{X}$ 
of the data manifold, i.e.,
\begin{align}
\left(\left(D\gb(\xb)\Pi_{T_{\xb}\mathcal{X}}\right)^{+}\right)^\top D^2\gb_s(
    \xb)\left(D\gb(\xb)\Pi_{T_{\xb}\mathcal{X}}\right)^{+} \in \diag(d_z)
\end{align}
 where $\Pi_{T_{\xb}\mathcal{X}}$ denotes the orthogonal projection on the tangent space (see Lemma~\ref{le:secondb} for details).

Constraining an encoder such that $\hat\gb \in \mathcal{G}_\textnormal{int}$ thus requires enforcing this structure on $\hat\gb$. This is challenging because the constraints depend on the geometry of the data manifold $\mathcal{X}$. Enforcing such constraints explicitly is thus ill-posed since the geometry of out-of-domain regions $\Xood \subset \mathcal{X}$ is unobserved. This suggests that constraining an encoder through approaches such as architectural design or regularization is infeasible, as any such method would necessarily be data-dependent as well as implicitly assume knowledge of $\Xood$.

We contrast this with the reverse direction for $\fb \in \mathcal{F}_\textnormal{int}$. In this case, the structure to be enforced (see~\cref{eq:diag_hessian}) is not manifold-dependent but is always aligned with the global coordinate axes (\cref{fig:manifolds}, right). This allows for a universal procedure to constrain a decoder to $\mathcal{F}_\textnormal{int}$, rather than a manifold-dependent one (\cref{fig:manifolds}, left). Moreover, such constraints can also be applied in OOD regions, since the manifold $\Zid$ extends in a Cartesian fashion and its structure is therefore known.

\begin{figure}
  \begin{center}
\includegraphics[height=2.5cm] {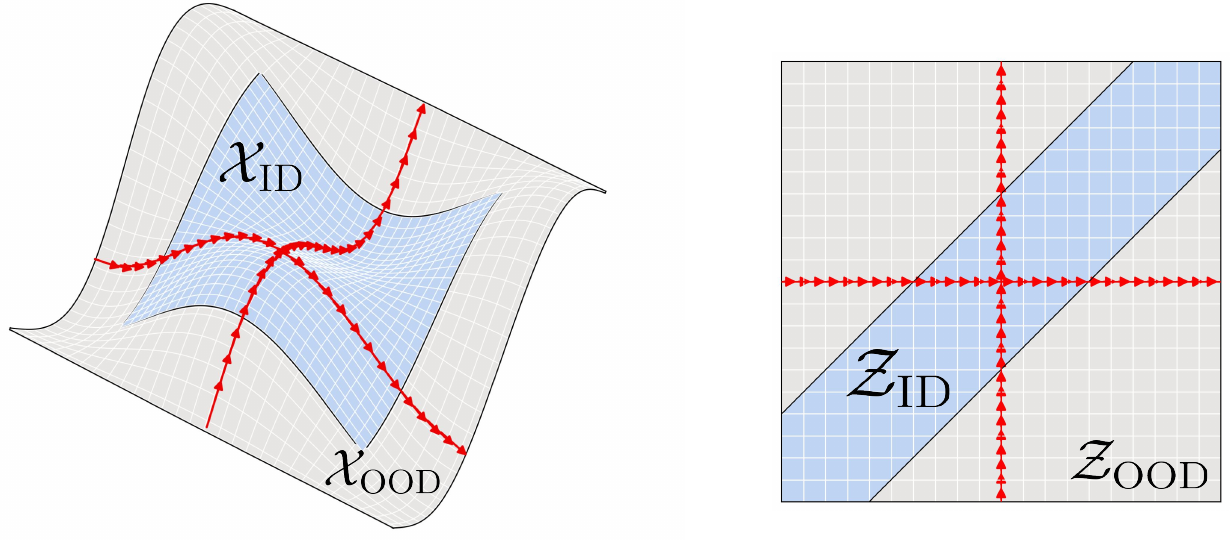}
\end{center}
\caption{Structure of a data manifold $\mathcal{X}$ and latent manifold $\mathcal{Z}$.}
\label{fig:manifolds}
\end{figure}

\textbf{Special case of $n=0$.}
We briefly discuss the case of functions in $\Fint$ when $n=0$. These functions, introduced by~\citet{brady2023provably}, are a special case of $n=1$ with the additional, more restrictive condition $|D_{\zb_k}\fb_i(\zb)|\cdot
|D_{\zb_l}\fb_i(\zb)|=0$ for each $i\in [d_x]$. In other words, each pixel $i$ depends only on a single slot and no interactions (such as occlusions) between objects are possible. In this case, we can find a left inverse $\gb$  of $\fb\in \Fint^{n=0}$  (for any $d_x\geq d_z$) whose Jacobian satisfies the sparsity constraint $|D_{\xb_i}\gb_k|\cdot |D_{\xb_j}\gb_l|=0$ for $l\neq k$. This additional structure can thus be leveraged to restrict $\Fenc$. However, this remains challenging in practice because the sparsity pattern (i.e., which slots $\zb_l$ depends on which pixel $\xb_i$) is not known a-priori. In Section~\ref{sec:experiments}, we study whether concepts satisfying $n=0$ can empirically enable compositional generalization for non-generative approaches.

\textbf{Takeaways.}
Our analysis reveals a fundamental asymmetry in the inductive biases required for compositional generalization. In generative approaches, the required inductive biases are substantially simpler and can be imposed explicitly on the decoder. In contrast, the corresponding inductive biases for non-generative approaches are considerably more complex and cannot be imposed explicitly. While this suggests that compositional generalization may be substantially more difficult for non-generative approaches, it does not necessarily imply that generative approaches achieve better OOD generalization in practice. We therefore investigate the extent to which this theoretical asymmetry translates into an empirical gap in compositional generalization in Sec.~\ref{sec:experiments}.
\section{Search and Replay}
\label{sec:search_replay}
Our results in Sec.~\ref{sec:theory} suggest that achieving compositional generalization is easier through a generative approach, i.e., inverting a learned decoder $\hat\fb$. If a decoder admits an explicit inverse, this inversion is trivial. For image data, however, constructing such a decoder is challenging as this generally requires that $\mathcal{X}=\mathbb{R}^{d_x}$~\citep{papamakarios2021normalizing}. Consequently, inverting $\hat\fb$ requires solving an inference problem: given an image $\xb$, we must find a latent $\zb^{*}$ such that
\begin{equation}
\label{eq:inf_prob}
\xb = \hat\fb(\zb^{*}).
\end{equation}
In this section, we explore strategies for solving this inference problem efficiently.

\textbf{Inversion on $\Xid$.}
For in-domain images, i.e. $\xb \in \Xid$, inverting a decoder $\hat \fb$ to obtain $\zb^{*}$ can be done directly by training an \emph{autoencoder}. Specifically, we can leverage an encoder $\hat \gb$ to invert $\hat \fb$ in-domain by minimizing the reconstruction objective
\begin{equation}
\label{eq:recon_obj}
    \min_{\hat \fb, \hat \gb} \EE_{\xb \sim \mathcal{X}_\textnormal{ID}}
\left\|\xb - {\hat\fb}({\hat\gb}(\xb))\right\|^{2}.
\end{equation}
Thus, for images $\xb \in \Xid$, $\zb^{*}$~(\cref{eq:inf_prob}) can be obtained directly as the  output of the encoder. 
For out-of-domain images, however, minimizing~\cref{eq:recon_obj} is not an option since $\xb \in \Xood$ is unobserved. Thus, to efficiently solve~\cref{eq:inf_prob} on $\Xood$, other strategies are required. We explore two such strategies: gradient-based search~(Sec.~\ref{sec:gb_search}) and generative replay~(Sec.~\ref{sec:gen_replay}).
\begin{figure*}[ht!]
    \centering
    \includegraphics[width=\linewidth]{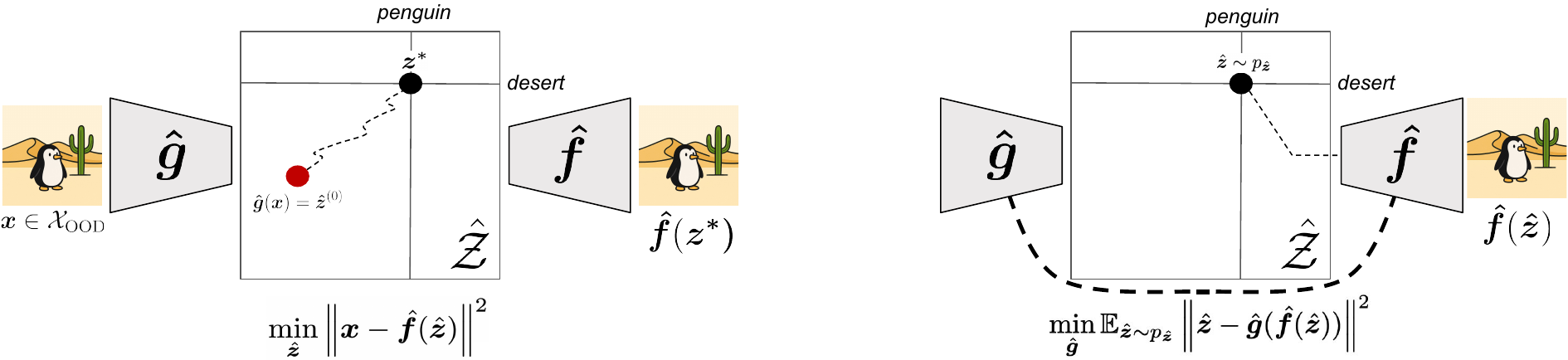}
    \caption{\looseness-1 \small \textbf{Approaches for inverting a generator out-of-domain.} \emph{Left.} Visualization of gradient-based search to invert a decoder $\hat \fb$ out-of-domain, with initialization given by an encoder $\hat\gb$. \emph{Right.} Visualization of generative replay in which an encoder is trained on OOD images generated by a decoder.
    }
    \label{fig:seach_replay}
\end{figure*}


\subsection{Gradient-Based Search.}
\label{sec:gb_search}
We note that the inference problem in~\cref{eq:inf_prob} can be expressed as an optimization problem, i.e.,
\vspace{-0.1cm}
\begin{equation}\label{eq:inf_opt}
\zb^{*} = \argmin_{\hat{\zb}} \; \left\| \xb - \hat\fb(\hat \zb) \right\|^{2}.
\end{equation}
Thus, for OOD images $\xb\in\Xood$, we can recover $\zb^*$ \emph{online} by solving~\cref{eq:inf_opt}  using \emph{gradient-based optimization}. The efficiency of this, however, depends on the initialization $\hat \zb^{(0)}$. If $\hat \zb^{(0)}$ is far from the optimum, many gradient steps are required, leading to slow or suboptimal convergence. To mitigate this, we can leverage the encoder trained on $\Xid$ to provide an initial prediction for $\zb^*$ such that $\hat \zb^{(0)} = \hat\gb(\xb)$ and then optimize~\cref{eq:inf_opt} (see Fig.~\ref{fig:seach_replay}, left). Intuitively, the encoder gives a fast ``System 1'' guess that constrains the space for slower, ``System 2'' reasoning~\citep{kahneman2011thinking,prabhudesai2023test}, where ``reasoning'' corresponds to gradient-based search~\citep{lecun2022path}.

\vspace{-0.1cm}
\subsection{Generative Replay}
\label{sec:gen_replay}
For out-of-domain images,~\cref{eq:inf_prob} can also be solved in an \emph{offline} manner by leveraging \emph{generative replay}~\citep{schwartenbeck2023generative,wiedemer2024provable}. Recall that images $\xb \in \Xood$ are generated by $\fb$ as combinations of ground-truth slots $\zb_{k}$. Since the decoder $\hat \fb$ identifies $\fb$ up to slot-wise transformations, images $\xb \in \Xood$ can likewise be generated by re-combining inferred slots $\hat{\zb}_{k}$. Concretely, this can be achieved by sampling a latent $\hat \zb$ from a distribution $p_{\hat \zb}$ with independent slot-wise marginals and decoding them with $\hat \fb$ such that $\hat\fb(\hat\zb) \in \Xood$. We can then solve~\cref{eq:inf_prob} out-of-domain by training an encoder $\hat\gb$ on these samples such that $\hat\gb(\hat\fb(\hat\zb)) = \hat\zb$ (see~\cref{fig:seach_replay}, right). This is captured by the following objective function~\citep{wiedemer2024provable} (see~\cref{fig:pseudocode} for pseudocode).
\begin{equation}\label{eq:replay_opt}
\min_{\hat \gb} {\EE_{\hat \zb \sim p_{\hat \zb}}} \left\| \hat \zb - \hat\gb(\hat\fb(\hat \zb)) \right\|^{2}.
\end{equation}
\vspace{-0.6cm}
\section{Experiments}
\label{sec:experiments}
In this section, we conduct an experimental study to assess (i) the extent to which compositional generalization emerges in non-generative methods~\cref{sec:empir_res} and (ii) whether generative methods, which leverage inductive bias on a decoder, can achieve superior compositional generalization through gradient-based search and generative replay~\cref{sec:empir_res_gen}.

\textbf{Datasets.}
To evaluate compositional generalization, we require datasets that can be partitioned into ID and OOD regions containing unseen concept combinations. To this end, we leverage PUG~\citep{bordes2023pug}, which, to our knowledge, is the most visually complex dataset offering such controllability. Each image consists of a background and one or two animals, drawn from 10 and 32 categories, respectively. Using PUG, we construct three different ID/OOD datasets (see Fig.~\ref{fig:pug_data}). In \emph{PUG-Background}, OOD images contain unseen combinations of animals and backgrounds; in \emph{PUG-Texture}, unseen combinations of animals and textures; and in \emph{PUG-Object}, unseen combinations of animals.  

\textbf{Encoders.} 
To map images to latent slots, we use encoders composed of two modules. First, images are divided into patches and mapped to a set of patch embeddings by a \emph{base encoder}. Then, a \emph{slot encoder} queries the resulting patch embeddings via cross-attention to produce a set of slots. We implement the base encoder using a Vision Transformer (ViT)~\citep{dosovitskiy2020image}. For the slot encoder, we use either a cross-attention Transformer with self-attention between slots~\citep{vaswani2017attention} or a Slot Attention module~\citep{locatello2020object}. When using an unsupervised objective, however, we do not use Slot Attention as we found it to give inconsistent object disentanglement.

\textbf{Decoders.}
To map latent slots to images, we use a cross-attention Transformer, where each pixel queries the slots to produce its output value~\citep{sajjadi2022scene,van2024moving}. Following~\citet{brady2025interaction}, we encourage this decoder to match~\cref{eq:interact_func} by regularizing the interactions between slots to be minimal. We achieve this by regularizing the attention weights of the model such that pixels are discouraged from attending to more than one slot (see~\cref{sec:app_models} for details). In~\cref{app:more_exp}, we
also report results using  an unstructured decoder which is not designed to match $\mathcal{F}_\textnormal{int}$.

\textbf{Metrics.}
To measure whether a model's learned slots enable compositional generalization, we train a linear readout on ID slots to predict the animal and background labels associated with each ground-truth slot. A prediction is considered correct for a given image if all three labels are classified correctly. We use a single shared readout across slots and resolve slot-label assignments using the Hungarian algorithm~\citep{Kuhn1955}. Compositional generalization is then quantified as the readout's accuracy on OOD images. 

\subsection{Non-Generative Methods}
\label{sec:empir_res}
\textbf{Setup.} We evaluate the compositional generalization of non-generative methods trained on each dataset, using both supervised and unsupervised objectives. In the supervised setting, we train an encoder-only classifier to predict animal and background labels from slots. In the unsupervised setting, we jointly train encoders with the regularized Transformer decoder using a VAE loss~\citep{kingma2014autoencoding}. Although encoders are trained to invert the decoder on ID data, the method remains non-generative OOD since the encoder is not designed to invert the decoder on OOD images.

\begin{table}[!t]
\centering

\caption{ID and OOD accuracy (\%) for various non-generative methods across datasets. All models use a ViT-small base encoder.}
\footnotesize
\setlength{\tabcolsep}{4pt}
\renewcommand{\arraystretch}{1.05}
\begin{tabular}{ll
                c
                c}
\toprule
\textbf{Slot Encoder} & \textbf{Objective} & \textbf{ID} $\uparrow$ & \textbf{OOD} $\uparrow$ \\
\midrule
\multicolumn{4}{l}{\emph{PUG-Background}} \\
\arrayrulecolor{black!45}\cmidrule{1-4}
\arrayrulecolor{black}
Slot Attention & Supervised     & 99.8 \sd{0.2} & 0.002 \sd{0.002} \\
Transformer    & Supervised     & 100.0 \sd{0.0}  & 0.008 \sd{0.006} \\
Transformer    & Unsupervised   & 99.9 \sd{0.04}  & 7.9 \sd{1.2} \\
\midrule
\multicolumn{4}{l}{\emph{PUG-Texture}} \\
\arrayrulecolor{black!45}\cmidrule{1-4}
\arrayrulecolor{black}
Slot Attention & Supervised     & 99.9 \sd{0.042} & 53.2 \sd{6.4} \\
Transformer    & Supervised     & 100.0 \sd{0.0}  & 64.8 \sd{6.6} \\
Transformer    & Unsupervised   & 100.0 \sd{0.0}  & 56.3 \sd{2.7} \\
\midrule
\multicolumn{4}{l}{\emph{PUG-Object}} \\
\arrayrulecolor{black!45}\cmidrule{1-4}
\arrayrulecolor{black}
Slot Attention & Supervised     & 100.0 \sd{0.0} & 98.0 \sd{1.0} \\
Transformer    & Supervised     & 100.0 \sd{0.0}  & 95.9 \sd{2.0} \\
Transformer    & Unsupervised   & 99.9 \sd{0.1}  & 99.9 \sd{0.04} \\
\bottomrule
\end{tabular}
\label{tab:pug_results_non_gen}
\end{table}

\begin{figure}[ht!]
   \begin{center}
    \includegraphics[width=0.9\linewidth]{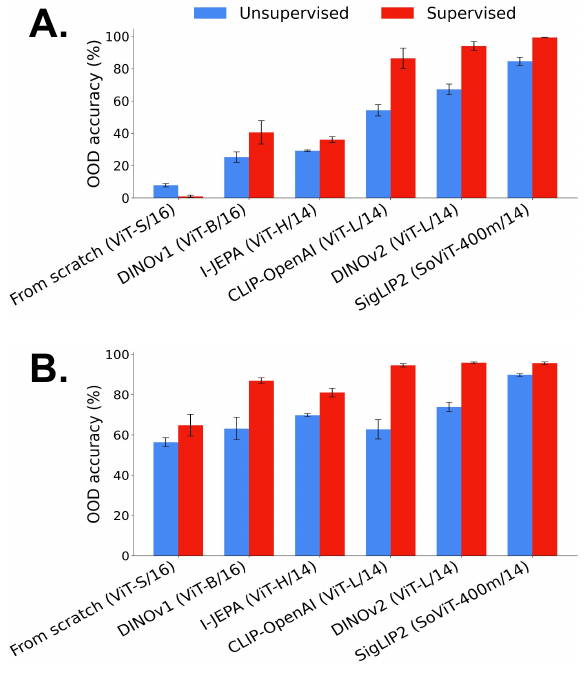}
    \caption{OOD performance on \emph{PUG-Background} (\textbf{A.}) and \emph{PUG-Texture} (\textbf{B.}) using different pretrained base encoders. Strong OOD performance emerges for non-generative methods on both datasets when using base encoders with large-scale pretraining.}
    \label{fig:pretrain_fig}
    \end{center}
\end{figure}
\textbf{Results: Training from scratch.} In~\cref{tab:pug_results_non_gen}, we report ID and OOD accuracy across all datasets for each model and training objective. Across datasets, all methods achieve near-optimal ID accuracy. On both \emph{PUG-Background} and \emph{PUG-Texture}, however, OOD performance is poor, suggesting that further inductive biases are required for these methods to generalize compositionally. In contrast, on \emph{PUG-Object}, we observe near-perfect OOD accuracy. In this dataset, OOD concept combinations do not interact, as animals never occlude one another. This corresponds to the special case of $n=0$ 
in~\cref{sec:encoder_res}, where $\mathcal{G}_\textnormal{int}$ has a simpler structure. Our results suggest that for such ground-truth structure, existing encoder inductive biases may be sufficient for OOD generalization.

\textbf{Results: Pretrained base encoder.} In~\cref{fig:pretrain_fig}, we study the impact of using base encoders with varying degrees of pretraining. We use DINOv1 \citep{caron2021emerging}, I-JEPA \citep{assran2023self}, DINOv2 \citep{oquab2024dinov}, CLIP \citep{radford2021learning}, and SigLIP2 \citep{tschannen2025siglip}, which are optionally fine-tuned using a LoRA adapter \citep{hu2022lora}. We report results for the best-performing combination of slot encoder and fine-tuning strategy. On \emph{PUG-Background} (\cref{fig:pretrain_fig} A.), we observe OOD gains when using base encoders pretrained on relatively small corpora (e.g., DINOv1 or I-JEPA trained on ImageNet) compared to training from scratch. Performance continues to improve with larger-scale pretraining, as in DINOv2 and SigLIP2, as well as when encoders are trained with supervised data. A similar trend appears on PUG-Texture (\cref{fig:pretrain_fig} B.), though it is less pronounced. These results suggest that data scale can compensate for a lack of explicit inductive biases in non-generative methods, enabling compositional generalization to emerge, but at the expense of data efficiency.

\begin{figure*}[t]
    \centering
    \includegraphics[width=\textwidth]{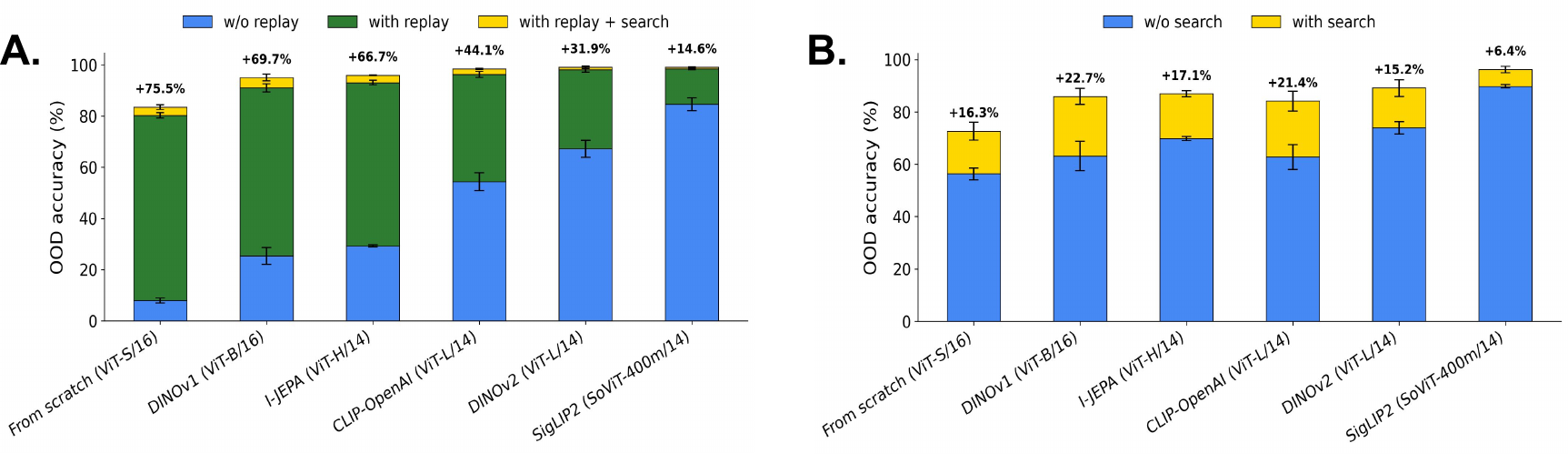}
  \caption{\looseness-1 \small \textbf{OOD performance for generative methods.} We report OOD performance on PUG-Background and PUG-Texture for unsupervised VAEs which leverage replay (Sec.~\ref{sec:gen_replay}) and search (Sec.~\ref{sec:gb_search}) trained with differing ViT base encoders. On PUG-Background (\textbf{A.}), we observe a significant increase in OOD performance using replay and additional gains through search. On PUG-Texture (\textbf{B.}) we also see a noticeable increase in OOD performance across all models when using search.
}
\label{fig:gen_results_exp}
\end{figure*}
\subsection{Generative Methods}
\label{sec:empir_res_gen}
\textbf{Setup.}
We take the unsupervised VAEs trained in~\cref{sec:empir_res} and apply search~\cref{sec:gb_search} and replay~\cref{sec:gen_replay} to invert the decoder OOD. For replay, we generate OOD images by randomly sampling combinations of ID slots following~\citet{wiedemer2024provable} and optimize the encoder on these generations using~\cref{eq:replay_opt}. For search, we use the trained encoder to provide an initial representation which we then optimize using~\cref{eq:inf_opt} with Adam~\citep{kingma2015adam}. For further details, see~\cref{sec:app_models}.

\textbf{Results.}
On \emph{PUG-Background} (\cref{fig:gen_results_exp} A.), we observe a significant increase in OOD accuracy when training encoders with replay across all base encoders, with further gains when search is additionally employed. On \emph{PUG-Texture} (\cref{fig:gen_results_exp} B.), replay cannot be applied: in our setup, slots are designed to capture objects and backgrounds, and therefore cannot be trivially recomposed to generate novel animal–texture combinations. However, leveraging search yields a clear improvement in OOD performance across all models. These results highlight that generative methods can improve OOD performance by leveraging inductive biases, rather than relying solely on increased data scale. In Appendix~\ref{app:more_exp}, we provide evidence that generative methods can achieve compositional generalization using less constrained Transformer decoders, highlighting that compositional generalization can also emerge from weaker, less explicit inductive biases.

\section{Related Work}
\label{sec:related_work}

\textbf{Limitations of non-generative methods.} Several empirical studies have shown limitations in compositional generalization for non-generative methods trained using natural language supervision~\citep{yuksekgonul2022and,lewis2022does,ma2023crepe,tong2024eyes,assouel2025object, west2024the}. These works generally posit that poor  generalization arises from issues with standard contrastive language-image training objectives. In contrast, our theoretical and empirical contributions suggest that such issues are more fundamental, arising from the structure of the inverses of inverse generators $\mathcal{G}_\textnormal{int}$.

\textbf{Generative approaches for improving generalization.}
The idea that a generative approach can enable compositional generalization has long been advocated in the cognitive science community~\citep{tenenbaum2011how,lake2017building, lake_human}. Empirical realizations of this idea have recently been shown for diffusion models repurposed as classifiers~\citep{wang2025compositional,jeong2025diffusion}. Further~\citep{prabhudesai2023test,prabhudesai2023diffusion}, showed that inverting a generative model with mechanisms similar to gradient-based search (Sec.~\ref{sec:gb_search}) improves object-decomposition for OOD images and enhances the robustness of classifiers. Recent work explored training encoder-only models using synthetically generated data similar to Sec.~\ref{sec:gen_replay}, showing improvements in representations~\citep{tian2023stablerep,fan2025scaling} and compositional generalization~\citep{wiedemer2024provable,assouel2022object,jung2024learning}. Our work provides a theoretical motivation for these approaches by highlighting challenges in achieving compositional generalization using non-generative methods.

\textbf{Causal and anti-causal learning.}
Our theoretical contribution relates to ideas in the field of causality. A key heuristic in this area posits that the factorization 
$P(\mathrm{cause}) P(\mathrm{effect}|\mathrm{cause})$ is, in general, less complex than the reverse factorization $P(\mathrm{effect}) P(\mathrm{cause}|\mathrm{effect})$
\citep{janzing2010causal,sun2006causal,sun2008causal}. It was conjectured by \citet{kilbertus2018generalization} that this principle indicates generalization is typically easier to achieve in the causal direction than in the anti-causal direction. Moreover, they propose an abstract version of the search procedure (Sec.~\ref{sec:gb_search}).
The present paper can be seen as providing a formal justification for these ideas through theoretical insights on the structure of generators $\fb$ (the causal direction) and their inverses $\gb$ (the anti-causal direction).
\section{Discussion}
\label{sec:discussion}

\textbf{Limitations.} 
Our theory is limited to generators which belong to $\mathcal{F}_\textnormal{int}$. We studied this function class as it provides a suitable model of visual data and is the largest class which enables OOD identifiability. 
However, these results may, in principle, fail to generalize to function classes associated with other settings, where non-generative strategies may be effective. Additionally, while our experiments leverage photorealistic data, they focus on concepts in simple settings which do not fully capture the complexity of real world data. To this end, an important future question is to understand how to create benchmarks to evaluate compositional generalization in a rigorous manner on data at a more realistic scale. For further discussions, see~\cref{app:extended_discussion}.

\textbf{Conclusion.}
In this work, we investigated whether generative methods offer a fundamental advantage for achieving data-efficient perception. To this end, we showed theoretically that they require substantially simpler inductive biases to achieve compositional generalization than their non-generative counterparts. We then provided empirical evidence that this theoretical asymmetry often translates into substantially better OOD performance in practice. While scaling generative methods to more challenging settings remains an open problem, we hope our findings will inspire renewed interest in this direction. 

\section*{Acknowledgments} We thank the anonymous reviewers, Julius von Kügelgen, Nicoló Zottino, Simon Schug, and Thaddäus Wiedmer for helpful feedback and discussions.

This work was supported by the German Federal Ministry of Education and Research (BMBF):
Tübingen AI Center, FKZ: 01IS18039A, 01IS18039B. WB acknowledges financial support via an Emmy Noether Grant funded by the German Research Foundation (DFG) under grant no. BR 6382/1-1 and via the Open Philantropy Foundation funded by the Good Ventures Foundation. WB is a member of the Machine Learning Cluster of Excellence, EXC number 2064/1 – Project number 390727645. This work utilized compute resources at the Tübingen Machine Learning Cloud, DFG FKZ INST 37/1057-1 FUGG.

\section*{Impact Statement}
This paper presents work whose goal is to advance the field of Machine
Learning. There are many potential societal consequences of our work, none
which we feel must be specifically highlighted here.


\bibliography{references}
\bibliographystyle{./icml2026/icml2026}

\newpage
\appendix
\onecolumn
\clearpage%
\appendix%
\addcontentsline{toc}{section}{Appendices}
\part{Appendices}

\changelinkcolor{black}{}
\parttoc
\changelinkcolor{BrickRed}{}
\paragraph{Use of Language Models}
Large language models (LLMs) were employed exclusively during the final stages of manuscript preparation for the purpose of refining language, grammar, and readability. They were not used for generating ideas, conducting analysis, or contributing to the substantive content of this work.
\section{Proofs}\label{app:proofs}

In this section we collect the proofs of the results in the paper and some additional background material. 
First, in Section~\ref{app:reg} we investigate
the local restrictions that 
$\gb\in \Gint$ need to satisfy.
Similarly we investigate in Section~\ref{app:architecture} whether we can enforce $\gb\in \Gint$ by architectural constraints.
Let us, however, first introduce a notation for a subset of $\Fint$.
\begin{definition}[Additive functions]
    \label{def:add}
We denote the function class of 
coordinate-wise additive functions $\fb:\RR^{d_z}\to\RR^{d_x}$ by
$\Fadd$. They can be expressed as
\begin{align}
    \fb(\xb)=\sum_{i=1}^{d_z}
    \fb_i(\xb_i)
\end{align}
where $\fb_i:\RR\to \RR^{d_x}$.
\end{definition}
Clearly $\Fadd$ agrees with $\Fint$ for $n=m=1$, i.e., interactions of first order and blocks of dimension 1 and generally $\Fadd\subset \Fint$ for $n\geq 1$ (higher order interactions and larger blocks are more flexible). 

\subsection{Structure of $\Gint$}
\label{app:reg}
As discussed in the main text, we can enforce
$\fb\in \Fint$ by enforcing that certain derivatives of $\fb$ vanish (see \eqref{eq:decoder_reg}). 
We now study to what extend this generalizes to functions $\gb\in \Gint$ that are left inverses of such functions.

\paragraph{The key relation.}
The key relation that we need for the proofs below
is that if $\gb\circ \fb(\zb)=\zb$ for two functions $\fb:\R^{d_z}\to\R^{d_x}$
and $\gb:\R^{d_x}\to \R^{d_z}$, then for every $s\in [d_z]$
\begin{align}
\label{eq:key_second}
    D\fb^\top(\zb) D^2\gb_s(
    \fb(\zb))  D\fb(\zb) + \sum_{k=1}^{d_x}(\partial_k \gb_s)(\fb(\zb)) D^2 \fb_k (\zb)=0.
\end{align}
This relation follows by straightforward calculation, indeed
we find using the chain rule
\begin{align}
\begin{split}
    \partial_i \partial_j \gb_s(\fb(\zb))
    &= \partial_i\left(\sum_{k=1}^{d_x} \partial_j \fb_k(\zb)(\partial_k \gb_s )(\fb(\zb))\right)
    \\
    & =\sum_{k,l=1}^{d_x}\partial_j \fb_k(\zb)\partial_i\fb_l(\zb)
    (\partial_k\partial_l \gb_s)(\fb(\zb))
    + \sum_{k=1}^{d_x}\partial_i\partial_j \fb_k(\zb)(\partial_k \gb_s )(\fb(\zb))
 \end{split}   
\end{align}
which is \eqref{eq:key_second} after rewriting the relation in matrix form.

\paragraph{Restrictions for $d_x=d_z$.}

We now prove  Lemma~\ref{le:second} showing that for $d_x=d_z$, i.e., for equal dimension of latent space and data it is possible to find a local constraint for the inverses of additive functions $\fb\in \Fadd$.

\lehessian*
\begin{remark}
    For higher dimensional slots there is a natural generalization, namely,  the expression $D\gb^{-\top}D^2\gb_s D\gb^{-1}$ has a block diagonal structure. 
\end{remark}
\begin{proof}
Note that $\gb\circ \fb(\zb)=\zb$ implies $\id_{d_z}=(D\gb\circ \fb) D\fb$
and thus $D\fb(z)=(D\gb)^{-1}(\fb(\zb))$. Therefore,
we find using \eqref{eq:key_second}
\begin{align}
     (D\gb)^{-\top}(\fb(\zb)) D^2\gb_s(\fb(\zb))  (D\gb)^{-1}(\fb(\zb))=- \sum_{k=1}^{d_x}(\partial_k \gb_s)(\fb(\zb)) D^2 \fb_k (\zb)\in \diag(d_z).
\end{align}
where we used that $\fb\in \Fadd$ implies that the off-diagonal entries of $D^2 \fb$ vanish. This implies the first part of the statement. For the reverse statement, we apply \eqref{eq:key_second} to $\fb\circ \gb(\xb)=\xb$
(here we use $d_z=d_x$) and we find that
\begin{align}
\begin{split}
 0   =  (D\gb)^\top(\xb) D^2\fb_s(\gb(\xb))  D\gb(\xb) + \sum_{k=1}^{d_z}(\partial_k \fb_s)(\gb(x)) D^2 \gb_k (x).
 \end{split}
\end{align}
We  multiply this relation from the left and right by $(D\gb)^{-\top}(\xb)$ and $(D\gb)^{-1}(\xb)$ respectively (the inverses exist by assumption) and we find
\begin{align}
    D^2\fb_s(\gb(\xb))= - \sum_{k=1}^{d_z}(\partial_k \fb_s)(\gb(\xb)) (D\gb)^{-\top}(\xb) D^2 \gb_k (\xb) (D\gb)^{-1}(\xb)\in \diag(d_z).
\end{align}
Here we used the assumption \eqref{eq:second_inverse_main}  to conclude that the right hand side is diagonal.
Therefore $\fb$ has a diagonal Hessian which implies that it is additive.
\end{proof}
The previous statement can be generalized to the general case $d_x> d_z$. The crucial ingredient is the following simple and standard lemma.
\begin{lemma}\label{le:moore}
    Let $\Ab\in \R^{d_1\times d_2}$ and $\Bb\in \R^{d_2\times d_1}$
    two matrices with $d_2\geq d_1$ and assume that $\Ab \Bb= \bs{1}_{d_1\times d_1}$.
    Then $\Bb=(\Ab\Pi)^{+} $ where $\Pi$ denotes the orthogonal projection onto $\mathrm{Range}(\Bb)$
    and $(\cdot)^+$ the Moore-Penrose inverse of a matrix.
\end{lemma}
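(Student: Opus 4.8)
The plan is to verify directly that $\Bb$ satisfies the four defining Penrose identities for the matrix $\Mb := \Ab\Pi \in \R^{d_1\times d_2}$; by uniqueness of the Moore--Penrose pseudoinverse this yields $\Bb = \Mb^{+} = (\Ab\Pi)^{+}$, which is the assertion. Throughout I would use only the hypothesis $\Ab\Bb = \bs{1}_{d_1\times d_1}$ together with the defining properties of the orthogonal projection $\Pi$ onto $\mathrm{Range}(\Bb)$: namely $\Pi^\top = \Pi$, $\Pi^2 = \Pi$, and $\Pi\Bb = \Bb$ (each column of $\Bb$ lies in $\mathrm{Range}(\Bb)$, which $\Pi$ fixes).

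First I would note that $\Mb\Bb = \Ab\Pi\Bb = \Ab\Bb = \bs{1}_{d_1\times d_1}$. Three of the four Penrose identities then follow immediately: $\Mb\Bb\Mb = \bs{1}_{d_1\times d_1}\Mb = \Mb$; next $\Bb\Mb\Bb = \Bb(\Mb\Bb) = \Bb$; and $(\Mb\Bb)^\top = \bs{1}_{d_1\times d_1}^\top = \bs{1}_{d_1\times d_1} = \Mb\Bb$, so $\Mb\Bb$ is symmetric. The only identity left to check is that $\Bb\Mb = \Bb\Ab\Pi$ is symmetric.

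For this last identity I would prove the sharper statement $\Bb\Ab\Pi = \Pi$. Given any $\bm{v}\in\R^{d_2}$, the vector $\Pi\bm{v}$ lies in $\mathrm{Range}(\Bb)$, so $\Pi\bm{v} = \Bb\bm{w}$ for some $\bm{w}\in\R^{d_1}$; then $\Bb\Ab\Pi\bm{v} = \Bb\Ab\Bb\bm{w} = \Bb(\Ab\Bb)\bm{w} = \Bb\bm{w} = \Pi\bm{v}$, using $\Ab\Bb = \bs{1}_{d_1\times d_1}$. Hence $\Bb\Mb = \Pi$, which is symmetric (indeed a genuine orthogonal projection), completing the fourth Penrose identity and the proof. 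The only point requiring a genuine (if short) argument is this identity $\Bb\Ab\Pi = \Pi$, which expresses that $\Ab$ and $\Bb$ restrict to mutually inverse isomorphisms between $\mathrm{Range}(\Bb)$ and $\R^{d_1}$, so that $\Bb\Ab$ acts as the identity on $\mathrm{Range}(\Bb)$; everything else is immediate bookkeeping with $\Ab\Bb = \bs{1}_{d_1\times d_1}$ and the properties of an orthogonal projection. (Alternatively one may invoke the range characterization of the pseudoinverse: $\Mb\Bb = \bs{1}_{d_1\times d_1}$ forces $\mathrm{rank}(\Mb) = d_1$ and $\mathrm{Range}(\Mb^\top) = \mathrm{Range}(\Pi\Ab^\top) \subseteq \mathrm{Range}(\Bb)$, hence equality, so $\Mb^{+}\Mb$ must be the orthogonal projection onto $\mathrm{Range}(\Bb)$, i.e.\ $\Pi$, and $\Mb\Mb^{+}$ the projection onto $\R^{d_1}$, i.e.\ $\bs{1}_{d_1\times d_1}$; these match $\Bb\Mb = \Pi$ and $\Mb\Bb = \bs{1}_{d_1\times d_1}$ found above.)
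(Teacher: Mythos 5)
Your proof is correct and follows essentially the same route as the paper's: both verify the four Penrose identities for $\Mb = \Ab\Pi$, with the only nontrivial step being the identity $\Bb\Ab\Pi = \Pi$, which you establish by exactly the same argument (any $\Pi\vb$ lies in $\mathrm{Range}(\Bb)$, so write $\Pi\vb = \Bb\wb$ and use $\Ab\Bb = \bs{1}$). Your organization is marginally cleaner in that you note $\Mb\Bb = \bs{1}_{d_1\times d_1}$ once up front and read off three of the four identities immediately, but the substance is identical.
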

\begin{proof}
    We check that $\Bb$ satisfies the Moore-Penrose axioms
    ($\Mb \Mb^+\Mb=\Mb$,
    $\Mb^+\Mb\Mb^+=\Mb^+$,
    $\Mb^+\Mb$ and $\Mb\Mb^+$ are Hermitian). We find
    \begin{align}
        \Ab\Pi\Bb \Ab\Pi
        = \Ab\Bb\Ab\Pi=\Ab\Pi
    \end{align}
    where we used $\Pi\Bb=\Bb$ by definition of $\Pi$.
    Similarly, we obtain
    \begin{align}
        \Bb \Ab\Pi\Bb=
        \Pi\Bb=\Bb.
    \end{align}
    Next we claim that
    \begin{align}
        \Bb \Ab\Pi=\Pi
    \end{align}
    which is Hermitian.
    Consider $v\in \R^{d_2}$ then by definition of $\Pi$ there is $w\in \R^{d_1}$ such that
    $\Bb w=\Pi v$ and thus
    \begin{align}
         \Bb \Ab\Pi v
         =  \Bb \Ab\Bb w
         =\Bb w = \Pi v.
    \end{align}
    Finally, we find
    \begin{align}
        \Ab\Pi \Bb=\Ab\Bb=\bs{1}_{d_1\times d_1}.
    \end{align}
\end{proof}
We have the following generalization of Lemma~\ref{le:second}.
\begin{lemma}\label{le:secondb}
        Let $\fb \in \Fadd$ and $\gb$
        a left-inverse of $\fb$. Then $\gb$ has the property that for $\xb\in \mc{X}$
  \begin{align}
   \label{eq:dg2_general}
    \left(\left(D\gb(\xb)\Pi_{T_{\xb}\mathcal{X}}\right)^{+}\right)^\top D^2\gb_s(
    \xb)\left(D\gb(\xb)\Pi_{T_{\xb}\mathcal{X}}\right)^{+} \in \diag(d_z)
   \end{align}
    is a diagonal matrix for $s\in [d_z]$. 
    Here, we denote by $\Pi_{T_{\xb}\mathcal{X}}$ the orthogonal projection on the tangent space at $\xb$.
\end{lemma}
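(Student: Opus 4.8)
The plan is to imitate the proof of Lemma~\ref{le:second} up to the point where the invertibility of $D\fb$ was used, and to replace that single step by the Moore--Penrose identity of Lemma~\ref{le:moore}. Fix $\zb\in\mathcal{Z}$ and set $\xb=\fb(\zb)\in\mathcal{X}$. Differentiating the left-inverse relation $\gb\circ\fb=\id_{d_z}$ once gives $D\gb(\xb)\,D\fb(\zb)=\id_{d_z}$, and the key second-order relation \eqref{eq:key_second}, evaluated at $\zb$, gives $D\fb^\top(\zb)\,D^2\gb_s(\xb)\,D\fb(\zb)=-\sum_{k=1}^{d_x}(\partial_k\gb_s)(\xb)\,D^2\fb_k(\zb)$. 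Since $\fb\in\Fadd$ has pixelwise Hessians $D^2\fb_k$ that are diagonal, the right-hand side is a sum of diagonal matrices, hence diagonal; so $D\fb^\top(\zb)\,D^2\gb_s(\xb)\,D\fb(\zb)\in\diag(d_z)$ for every $s\in[d_z]$. This is already the bilinear expression in \eqref{eq:dg2_general}, provided one can establish the identity $D\fb(\zb)=\bigl(D\gb(\xb)\,\Pi_{T_{\xb}\mathcal{X}}\bigr)^{+}$.

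Establishing that identity is the crux of the argument and is where the ambient dimension enters. I would apply Lemma~\ref{le:moore} with $\Ab=D\gb(\xb)\in\R^{d_z\times d_x}$ and $\Bb=D\fb(\zb)\in\R^{d_x\times d_z}$; the hypothesis $\Ab\Bb=\id_{d_z}$ is exactly the first-order chain-rule identity above, so the lemma yields $D\fb(\zb)=\bigl(D\gb(\xb)\,\Pi\bigr)^{+}$, where $\Pi$ is the orthogonal projection onto $\mathrm{Range}\!\bigl(D\fb(\zb)\bigr)$. It then remains to identify $\mathrm{Range}\!\bigl(D\fb(\zb)\bigr)$ with the tangent space $T_{\xb}\mathcal{X}$: because $\mathcal{X}=\fb(\mathcal{Z})$ and $\fb$ is a diffeomorphism onto its image, $\mathcal{X}$ is an embedded $d_z$-dimensional submanifold of $\R^{d_x}$ whose tangent space at $\xb=\fb(\zb)$ is precisely the column space of $D\fb(\zb)$. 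Hence $\Pi=\Pi_{T_{\xb}\mathcal{X}}$, and therefore $\bigl((D\gb(\xb)\,\Pi_{T_{\xb}\mathcal{X}})^{+}\bigr)^\top=D\fb^\top(\zb)$, so the left-hand side of \eqref{eq:dg2_general} equals $D\fb^\top(\zb)\,D^2\gb_s(\xb)\,D\fb(\zb)$, which we already showed is diagonal.

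I expect the only real obstacle to be careful bookkeeping rather than a conceptual difficulty. Three points need attention: (i) $\gb$ must be twice differentiable in a neighbourhood of $\mathcal{X}$ in the ambient space so that $D\gb(\xb)$ and $D^2\gb_s(\xb)$ in \eqref{eq:key_second} are genuine ambient derivatives --- this is consistent because \eqref{eq:key_second} only ever differentiates the composition $\gb\circ\fb$, which makes sense along $\mathcal{Z}$; (ii) one must keep straight that the projector supplied by Lemma~\ref{le:moore} lands on $\mathrm{Range}(\Bb)=\mathrm{Range}(D\fb(\zb))$, i.e.\ the subspace spanned by the \emph{columns} of $D\fb$, which is exactly what makes the identification with $T_{\xb}\mathcal{X}$ legitimate; and (iii) no reverse implication (``diagonal $\Rightarrow$ $\gb\in\Gint$'') is asserted here, so unlike Lemma~\ref{le:second} we never need $d_x=d_z$, and the argument goes through for all $d_x\geq d_z$.
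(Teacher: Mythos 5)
Your proof is correct and follows essentially the same route as the paper's: derive the diagonal structure of $D\fb^\top D^2\gb_s D\fb$ from \eqref{eq:key_second} and $\fb\in\Fadd$, then apply Lemma~\ref{le:moore} with $\Ab=D\gb(\xb)$, $\Bb=D\fb(\zb)$ and identify $\mathrm{Range}(D\fb(\zb))$ with $T_{\xb}\mathcal{X}$ to rewrite $D\fb$ as the pseudoinverse. You spell out the bookkeeping (chain rule, range identification, ambient differentiability) more explicitly than the paper's terse proof, but the argument is the same.
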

\begin{proof}
Starting from \eqref{eq:key_second} we find that for $\fb\in \Fadd$
we get
\begin{align}
     D\fb^\top(\zb) D^2\gb_s(
    \fb(\zb))  D\fb(\zb) \in \diag(d_z).
\end{align}
Applying Lemma~\ref{le:moore}
   we find
   \begin{align}
D\fb(\zb)=\left(D\gb(\fb(\zb))\Pi_{T_{\fb(\zb)}\mathcal{X}}\right)^{+}
   \end{align}
   because the range of $D\fb$ is the tangent space of the data manifold.
   Therefore we conclude
   that for $\xb\in \mc{X}$ the relation \eqref{eq:dg2_general} indeed holds.
   \end{proof}

\paragraph{Regularization for $d_x>d_z$.}
In this paragraph we investigate the local restrictions that $\gb\in \Gint$ need to satisfy, and in particular we prove Theorem~\ref{th:regularizer}.
The proof of Theorem~\ref{th:regularizer} requires two lemmas as a key ingredient, which state that the crucial constraint on the second derivative stated in \eqref{eq:key_second} can be satisfied for a suitable choice of $\Mb=D\fb(0)$ and $D^2\fb(0)$ for  given matrices $\Bb_s$ corresponding to the Hessian of $\gb$ and almost every matrix $\Ab$ (corresponding to the Jacobian of $\gb$). The first lemma establishes the
existence of $\Mb$ such that first term in \eqref{eq:key_second} (given by $\Mb^\top \Bb_s\Mb$ is diagonal for all $s$.
The second lemma constructs suitable second derivatives $D^2\fb$ so that the relation \eqref{eq:key_second} also holds for the diagonal entries.
\begin{lemma}\label{le:techincal_reg}
Assume $d_x\geq d_z^3$.
    For all symmetric matrices $\Bb_s\in \R^{d_x\times d_x}$ for $s\in [d_z]$, and almost every $\Ab\in \R^{d_z\times d_x}$ there is a matrix $\Mb\in \R^{d_x\times d_z}$ such that 
    $\Mb^\top \Bb_s \Mb\in \diag(d_z)$ for $s\in [d_z]$ and $\Ab\Mb=\id_{d_z}$.
\end{lemma}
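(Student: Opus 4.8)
The plan is to reformulate the statement in terms of the columns of $\Mb$ and then solve for them one at a time inside disjoint coordinate blocks, the hypothesis $d_x\geq d_z^3$ being exactly what makes that room available. Write $\Mb=[\vm_1,\dots,\vm_{d_z}]$ with $\vm_k\in\R^{d_x}$, and let $\eb_k\in\R^{d_z}$ denote the $k$-th standard basis vector. Then $\Ab\Mb=\id_{d_z}$ is equivalent to $\Ab\vm_k=\eb_k$ for every $k$, and, since each $\Bb_s$ is symmetric, $\Mb^\top\Bb_s\Mb\in\diag(d_z)$ is equivalent to the bilinear relations $\vm_i^\top\Bb_s\vm_j=0$ for all $i\neq j$ and all $s\in[d_z]$. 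So it suffices to produce $d_z$ vectors satisfying these $d_z$ affine and $d_z\binom{d_z}{2}$ bilinear constraints.

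Because $d_x\geq d_z^3$ I would partition $[d_x]$ into $d_z$ blocks $I_1,\dots,I_{d_z}$ with $|I_k|\geq d_z^2$ and look for a solution in which $\vm_k$ is supported on $I_k$. The point of this ansatz is that the system decouples across columns: $\Ab\vm_k=\eb_k$ then only involves the block $\Ab_k:=\Ab|_{:,I_k}\in\R^{d_z\times|I_k|}$, while for $i\neq j$ the quantity $\vm_i^\top\Bb_s\vm_j=(\vm_i)_{I_i}^\top(\Bb_s)_{I_i,I_j}(\vm_j)_{I_j}$ only involves the off-diagonal block $(\Bb_s)_{I_i,I_j}$. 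I would then build the columns greedily for $k=1,\dots,d_z$: at step $k$ the unknown $(\vm_k)_{I_k}\in\R^{|I_k|}$ must satisfy the $d_z$ affine equations $\Ab_k(\vm_k)_{I_k}=\eb_k$ together with the at most $(k-1)d_z$ linear equations $\vm_j^\top\Bb_s\vm_k=0$ for $j<k$ and $s\in[d_z]$ — altogether at most $d_z^2\leq|I_k|$ linear equations in $|I_k|$ unknowns. Writing $W_k\subseteq\R^{|I_k|}$ for the span of the covectors $(\Bb_s)_{I_k,I_j}(\vm_j)_{I_j}$ entering those orthogonality constraints, one checks that the system is consistent as soon as $\Ab_k$ has full row rank and $\mathrm{rowspace}(\Ab_k)\cap W_k=\{0\}$; I would then fix $(\vm_k)_{I_k}$ to be any solution, chosen measurably in $\Ab_1,\dots,\Ab_k$ (for instance the minimum-norm one).

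The genericity statement is then obtained block by block, and this is where $d_x\geq d_z^3$ enters quantitatively. Given $\Ab_1,\dots,\Ab_{k-1}$ for which the earlier steps succeeded, the subspace $W_k$ is determined, with $\dim W_k\leq(k-1)d_z\leq d_z^2-d_z\leq|I_k|-d_z$. Consequently the set of $\Ab_k$ for which step $k$ fails — i.e. $\Ab_k$ is rank-deficient, or its row space meets $W_k$ nontrivially — is contained in the zero locus of a nontrivial polynomial on $\R^{d_z\times|I_k|}$ (nontriviality holding precisely because $d_z+\dim W_k\leq|I_k|$), and is therefore Lebesgue-null. Applying Fubini over $k=1,\dots,d_z$ then shows that the set of exceptional $\Ab\in\R^{d_z\times d_x}$ is null; for every other $\Ab$ the greedy construction runs to completion, and the resulting $\Mb$ satisfies $\Ab\Mb=\id_{d_z}$ and $\Mb^\top\Bb_s\Mb\in\diag(d_z)$ for all $s$.

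I expect the main obstacle to be an apparent circularity: the orthogonality covectors used at step $k$ are built from the columns $\vm_1,\dots,\vm_{k-1}$, which themselves depend on $\Ab$, so a naive ``generic $\Ab$'' argument applied afresh at each step would not be legitimate. The block decomposition is precisely the device that removes this difficulty — since $\vm_j$ is supported on $I_j$, the covectors spanning $W_k$ do not depend on the block $\Ab_k$, so the failure set genuinely is a proper subvariety in the $\Ab_k$-coordinates and the step-by-step Fubini argument goes through. A minor additional point is that adversarial choices of the $\Bb_s$ cannot break consistency: whenever a covector $(\Bb_s)_{I_k,I_j}(\vm_j)_{I_j}$ happens to vanish, the corresponding equation is vacuous, so such choices can only reduce the effective number of constraints, never create an inconsistent system.
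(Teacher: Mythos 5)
Your proof is correct, but takes a different route than the paper's. The paper constructs, \emph{before ever looking at $\Ab$}, a family of $d_z$ subspaces $V_1,\dots,V_{d_z}\subset\R^{d_x}$ of dimension $d_z$ that are mutually $\Bb_s$-orthogonal (i.e.\ $\vb^\top \Bb_s \vb'=0$ for $\vb\in V_i$, $\vb'\in V_j$, $i\neq j$, all $s$); the inductive dimension count $d_x - d_z^2 j \geq d_z$ for $j\leq d_z-1$ is exactly what allows each new subspace to be carved out of the kernel of the bilinear constraints against the previous ones. Only at the very end does $\Ab$ enter: since $V_i$ is independent of $\Ab$, the $d_z\times d_z$ matrix $\Ab V_i$ is invertible for almost every $\Ab$, and one takes $\Mb$'s $i$-th column in $V_i$ solving $\Ab\wb^i=\eb^i$. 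The genericity is therefore a one-shot finite union of null sets with no conditioning. You instead sever $[d_x]$ into $d_z$ coordinate blocks $I_k$ and confine $\vm_k$ to $I_k$, so that the orthogonality covectors seen at step $k$ are functions only of the earlier blocks of $\Ab$; that restores independence and lets you run a step-by-step Fubini argument. The block ansatz is what neutralises the circularity you correctly flag — the paper avoids the issue altogether by making the orthogonal subspaces $\Ab$-independent from the outset. The paper's route yields a cleaner genericity argument (no iterated conditioning, no measurable-selection caveats), while yours is more algorithmic and makes explicit where each equation lives; both burn the same $d_z^3$ dimension budget, and neither approaches the conjectured sharp threshold $d_z(d_z+1)/2$ that the paper's remark points to.
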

\begin{remark}
\begin{enumerate}
    \item 
    Counting parameters and equations, we find that $\Mb$ has $d_zd_x$
    parameters and (by symmetry of $\Bb_s$) there are 
    \begin{align}
        d_z\cdot \frac{d_z(d_z-1)}{2}+d_z^2=\frac{d_z^2(d_z+1)}{2}
    \end{align}
    equations. So, generally, we expect the result to hold for
    $d_x \geq d_z(d_z+1)/2$.
    \item On the other hand, the result does not hold for every $\Ab$ with maximal rank. Indeed, there can be a non-trivial null set of full rank matrices $\Ab$ such that the result does not hold. E.g., consider $d_z=2$, $\Ab\in \R^{d_z\times d_x}$ such that all entries of $\Ab$ are zero except $\Ab_{1,1}=\Ab_{2,2}=1$. Moreover, $\Bb_1$ has all entries zero except $(\Bb_1)_{1,2}=(\Bb_1)_{2,1}=1$. Then $\Ab\Mb=\id_{d_z}$ implies that
    $\Mb_{1,1}=\Mb_{2,2}=1$, and $\Mb_{1,2}=\Mb_{2,1}=0$. But then we find
    $\Mb_{:,1}^\top \Bb_1 \Mb_{:,2}=(\Bb_1)_{1,2}=1\neq 0$. 
\end{enumerate}
\end{remark}
\begin{proof}
    We inductively construct $d_z$ linear subspaces $V_i\subset \R^{d_x}$
    such that $\dim(V_i)=d_z$ and 
    \begin{align}
        (\vb^i)^\top \Bb_s \vb^j=0
    \end{align}
    for $\vb^i\in V_i$, $\vb^j\in V_j$ and $i\neq j$.
    We pick $V_1$ arbitrarily. Then, given a basis $\vb^{i,1},\ldots, \vb^{i,d_z}$ of $V_i$ for $i\leq j$ we select $V_{j+1}\subset \ker \Tb_j$ where $\Tb_j:\R^{d_x}\to \R^{d_z^2\cdot j}$ given by $(\Tb_j \vb)_{s,(k,i)}=  (\vb^{i,k})^\top \Bb_s \vb$ (here it is convenient to identify $[d_z^2\cdot j]$ with $[d_z]\times([d_z]\times [j])$).
    By assumption $d_x-d_z^2\cdot j\geq d_x-d_z^2\cdot (d_z-1)\geq d_z$
    and therefore $\dim \ker \Tb_j\geq d_z$ and we can find a suitable subspace $V_{j+1}\subset \ker \Tb_j$.
    Given a matrix $\Ab=(\ab^1,\ldots, \ab^{d_z})^\top\in \R^{d_z\times d_x}$,
     we want to find $\wb^i\in V_i$ so that $\Mb=(\wb^1,\ldots ,\wb^{d_z})$ satisfies $\Ab\Mb=\id_{d_z}$. Equivalently $\Ab\wb^i=\eb^i$, where $\eb^i$ denotes the $i$-th standard basis vector. We expand into the basis of $V_i$, i.e., $\wb^i=\sum_j \lambdab^i_j \vb^{i,j}$ and find the equivalent relation
     \begin{align}
       \Ab\wb^i = (\ab^1,\ldots, \ab^{d_z})^\top(\vb^{i,1},\ldots, \vb^{i,d_z})\lambdab^i = \eb^i.
     \end{align}
     Since the second matrix has maximal rank ($(\vb^{i,k})_{1\leq k\leq d_z}$ is a basis of $V_i$), we find that for almost all $\Ab$ the matrix product is invertible, and a solution   $\lambdab^i$
     exists and thus a suitable $\wb^i$ exists.
     To see this, we can assume that the basis
     $\vb^{i,\cdot}$ is an orthonormal basis and expand $\ab_i$ in this basis (and an irrelevant orthogonal complement). We conclude that for almost all $\Ab$ such a $\wb^i$ exists. Since the union of null-sets is a null-set the same statement holds for almost all $\Ab$ for all $i$ at the same time and therefore we find a matrix $\Mb$ such that $\Ab\Mb=\id_{d_z}$ and, moreover, $(\wb^i)^\top \Bb_s \wb^j=0$ because this holds for all $\wb^i\in V_i$ and $\wb^j\in V_j$.
\end{proof}
We now construct the diagonal matrices that will later correspond to $D^2 \fb_s$.
\begin{lemma}\label{le:techincal_reg2}
Assume $d_x\geq d_z$
Given $\Ab\in \R^{d_z\times d_x}$ of maximal rank and diagonal matrices
$\Db^1,\ldots, \Db^{d_z}\in \R^{d_z\times d_z}$ we can find diagonal matrices 
$\Lambdab^1, \ldots, \Lambdab^{d_x}\in \R^{d_z\times d_z}$ such that for all $s\in [d_z]$
\begin{align}\label{eq:diag_relation}
    \Db^s = -\sum_{i=1}^{d_x} \Ab_{s,i} \Lambdab^i.
\end{align}
\end{lemma}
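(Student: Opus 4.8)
The plan is to exploit that \eqref{eq:diag_relation} is an identity between \emph{diagonal} $d_z\times d_z$ matrices, which decouples it into $d_z$ independent scalar linear systems, one per diagonal slot. Concretely, I would write $D^s_t := (\Db^s)_{tt}$ and $\lambda^i_t := (\Lambdab^i)_{tt}$ for $s,t\in[d_z]$ and $i\in[d_x]$; then, since both sides of \eqref{eq:diag_relation} are diagonal, the matrix identity is equivalent to requiring that for every slot $t\in[d_z]$,
\begin{align}
D^s_t = -\sum_{i=1}^{d_x}\Ab_{s,i}\,\lambda^i_t \qquad\text{for all } s\in[d_z],
\end{align}
i.e.\ $\Ab\,\boldsymbol{\lambda}_t = -\boldsymbol{D}_t$ where $\boldsymbol{\lambda}_t := (\lambda^1_t,\dots,\lambda^{d_x}_t)^\top\in\R^{d_x}$ and $\boldsymbol{D}_t := (D^1_t,\dots,D^{d_z}_t)^\top\in\R^{d_z}$.

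Next I would invoke the hypotheses on $\Ab$: maximal rank together with $d_x\geq d_z$ means $\Ab$ has full row rank $d_z$, so $\Ab:\R^{d_x}\to\R^{d_z}$ is surjective and $\Ab\Ab^\top$ is invertible. Hence for each fixed $t$ the system $\Ab\,\boldsymbol{\lambda}_t=-\boldsymbol{D}_t$ is solvable --- one explicit choice being $\boldsymbol{\lambda}_t = -\Ab^\top(\Ab\Ab^\top)^{-1}\boldsymbol{D}_t$. Carrying this out for $t=1,\dots,d_z$ and assembling $\Lambdab^i := \operatorname{diag}(\lambda^i_1,\dots,\lambda^i_{d_z})$ produces diagonal matrices satisfying \eqref{eq:diag_relation}.

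I do not expect a genuine obstacle here; the only things needing care are the index bookkeeping and the observation that diagonality makes the $d_z$ slots fully independent, so that surjectivity of $\Ab$ alone suffices --- in contrast to Lemma~\ref{le:techincal_reg}, no genericity (``almost every $\Ab$'') is required. This lemma is then combined with Lemma~\ref{le:techincal_reg} in the proof of Theorem~\ref{th:regularizer}: there the $\Db^s$ are the diagonal parts of $\Mb^\top\Bb_s\Mb$ (diagonal by Lemma~\ref{le:techincal_reg}) and the $\Lambdab^i$ constructed here serve as the Hessians $D^2\fb_i(0)$, so that the key relation \eqref{eq:key_second} holds identically at the base point, exhibiting the desired $\fb\in\Fint$.
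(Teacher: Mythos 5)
Your proof is correct and takes essentially the same route as the paper: both recognize \eqref{eq:diag_relation} as a linear system in the diagonal entries of the $\Lambdab^i$ and conclude solvability from the full row rank of $\Ab$. The paper packages this as a single Kronecker-product system $(\Ab\otimes \id_{d_z})\lambdab=-\db$ of rank $d_z^2$, while you decouple it into $d_z$ independent systems $\Ab\boldsymbol{\lambda}_t=-\boldsymbol{D}_t$ and solve each via the pseudoinverse; these are just two bookkeepings of the same argument.
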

\begin{proof}
    The proof is straightforward as soon as one observes that this is a linear equation for the diagonal entries of $\Lambdab^i$.
    Indeed, denoting by $\lambdab=(\Lambdab^1_{11},\ldots, \Lambda^1_{d_z,d_z},\ldots, (\Lambda^{d_x}_{d_z,d_z})^\top\in
    \R^{d_z\cdot  d_x}$
    the vector containing all diagonal entries of the matrices $\Lambdab^i$ and similarly $\db=(\Db^1_{11},\ldots, \Db^1_{d_z,d_z},\ldots, \Db^{d_z}_{d_z,d_z})^\top\in
    \R^{d_z^2}$ for the diagonal entries of $\Db^s$. Then we can rewrite
    \eqref{eq:diag_relation} as follows using the Kronecker product $\otimes$
    \begin{align}
        (\Ab\otimes \id_{d_z})\lambdab = -\db.
    \end{align}
    Now the rank of the matrix $\Ab\otimes \id_{d_z}$ is the product of the ranks, i.e., $d_z\min(d_x,d_z)= d_z^2\leq d_xd_z$ and thus a solution $\lambdab$ exists.
\end{proof}
With these technical lemmas at hand, we can prove the theorem which we now restate for convenience of the reader.

\regularizer*

\begin{proof}[Proof of Theorem~\ref{th:regularizer}]
Clearly we can assume that $\xb_0=0$. 
The key idea is that if we can ensure that \eqref{eq:key_second}
holds for $\zb=0$ we can extend $\fb$ and $\gb$ such that $\gb\circ \fb(\zb)=\zb$
and $\fb\in \Fadd$. 
To achieve this, we first apply Lemma~\ref{le:techincal_reg}
and then find a matrix $\Mb$ such that $\Ab\Mb=\id_{d_z}$
and $\Mb^\top \Bb_s \Mb\in \diag(d_z)$. Then we apply Lemma~\ref{le:techincal_reg2} and find matrices $\Lambdab^i$ such that
\begin{align}
    \Mb^\top \Bb_s \Mb + \sum_{i=1}^{d_x} \Ab_{s,i} \Lambdab^i=0.
\end{align}
Now we pick a function $\fb\in \Fadd$ such that $\fb(0)=0$, $D\fb(0)=\Mb$
and $D^2\fb_i = \Lambdab^i$. Clearly, this is possible because $\Lambdab^i$
are diagonal, e.g., we can locally use a quadratic polynomial to achieve this. The next step is to construct a function $\gb$ such that $\gb\circ \fb(\zb)=\zb$. Using standard techniques (partition of unity) it is sufficient to construct this locally and then extend it globally.
First, we consider $\bar{\phi}:\Omega\subset \R^{d_x}\times \R^{d_x}$ 
such that $\bar{\phi}(\fb(\zb))=(\zb,0)$ for $\zb\in \Omega$ (e.g., by the existence of tubular neighborhoods). We call the composition of $\bar{\phi}$ with the projection on the first $d_z$ components  $\phi$.
Then we define 
\begin{align}
    \gb(\xb)= \Ab \xb + \frac{1}{2} \begin{pmatrix}
        \xb^\top \Bb_1 \xb\\
        \vdots\\
        \xb^\top \Bb_{d_z} \xb
    \end{pmatrix}
     +\hb(\phi(\xb))
\end{align}
where $\hb$ is given by 
\begin{align}
    \hb(\zb)=\hb(\phi(\fb(\zb))= \zb - \Ab\fb(\zb) - \frac{1}{2} \begin{pmatrix}
        \fb(\zb)^\top \Bb_1 \fb(\zb)\\
        \vdots\\
        \fb(\zb)^\top \Bb_{d_z} \fb(\zb).
    \end{pmatrix}
\end{align}
We can calculate
\begin{align}
    \gb(\fb(\zb))=\zb
\end{align}
so $\gb$ is indeed a left-inverse of $\fb$.
Taking the derivative of this equation at 0 we obtain
\begin{align}
    D\hb(0)=\id_{d_z}- \Ab (D\fb(0))+0=\id_{d_z}-\Ab\Mb=0.
\end{align}
For the second derivative we get
\begin{align}
    D^2\hb_s(0)=- \sum_{i=1}^{d_x}\Ab_{s,i}D^2 \fb_i(\zb)-D\fb(0)^\top \Bb_s D\fb(0)
    = - \sum_{i=1}^{d_x}\Ab_{s,i}\Lambdab^i+\Mb^\top \Bb_s\Mb=0.
\end{align}
Here we used for the derivative of the quadratic term that the contribution where the derivative hits one $\fb(\zb)$ twice vanishes since $\fb(0)=0$.
Finally, we can now evaluate 
\begin{align}
    D\gb(0)=\Ab+D\hb(\phi(0))=\Ab+D\hb(0)=\Ab
\end{align}
and
\begin{align}
    D^2\gb_s(0)=\Bb_s +D^2 \hb_s\circ \phi=\Bb_s
\end{align}
where $D^2 \hb_s\circ \phi=0$ follows from the chain rule and $D\hb(0)=0$ and $D^2\hb(0)=0$.
\end{proof}

\subsection{Constraining $\Fenc$ by architecture}\label{app:architecture}
In this section we discuss results showing that it is challenging to construct practical function classes $\Fenc$ which are sufficiently expressive so that they contain a left-inverse for each $\fb\in \Fint$.
As explained before,  the main challenge is that setting 
$\Fenc=\Gint$ is in principle sufficient to ensure identifiability and out of distribution generalization. So we need to make additional assumptions on $\Fenc$ that function classes used in widely applied algorithms satisfy which then ensure that $\Fenc$ is very expressive preventing that \eqref{eq:ood_ident_inv_gen} holds.
We will make the assumption
that $\Fenc$ has a linear structure, i.e.,
$\gb_1+\gb_2\in \Fenc$ if $\gb_1,\gb_2\in\Fenc$. This is clearly satisfied when $\Fenc$ is a vector space (e.g., this assumption is satisfied for linear or kernel methods, or when learning a linear head on a fixed representation).
For functions implemented by neural networks
with fixed architecture this is in general not true. However, 
it does apply to infinite width limits of fixed architectures (this does not generally imply universal approximation properties when the architecture is sparse, e.g., we use slot-wise neural networks for the forward direction which cannot approximate interactions $\xb_1\xb_2$ even at infinite width).
Note that large width is also generally required to make neural networks sufficiently expressive because for fixed width neural networks implement a parametric function class while $\Fint$ is non-parametric. 
We then show that such a function class $\Fenc$ does not have a useful inductive bias.

\paragraph{Architecture constraints for $d_x=d_z$.}
We first consider the simpler case $d_x=d_z$
where $\fb$ is bijective on the codomain (and not only on its image). 

Our main result here is that $\Fenc$ 
has the universal approximation property when
$\Fenc\supset \Fadd^{-1}$ and $\Fenc$ is closed under addition.
\begin{theorem}\label{th:additive_inverse2}
   Assume $d_x=d_z=d$. Consider an encoder function class $\Fenc$ with the following two properties:
    \begin{enumerate}
        \item The class $\Fenc$ is closed under addition, i.e., for $\gb_1,\gb_2\in \Fenc$ also $\gb_1+\gb_2\in \mathcal{G}_\textnormal{enc}$.
        \item The function class $\Fenc$ is
        expressive enough such that it contains all inverses of additive functions, i.e., $\Fadd^{-1}\subset \Fenc$.
    \end{enumerate}
    Then $\Fenc$ is dense in the space of all continuous functions on all compact subset of $\RR^{d_x}$. 
\end{theorem}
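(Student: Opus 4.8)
The plan is to show that $\Fenc$ contains every elementary ridge function $\xb\mapsto\psi(\ab^\top\xb)\,\eb_j$ with $j\in[d]$, $\ab\in\R^d\setminus\{\vzero\}$ and $\psi\in C(\R)$, and then to invoke the classical fact --- implied by the universal approximation theorem --- that finite sums of ridge functions are dense in $C(K)$ for every compact $K\subset\R^d$. (Constants are themselves ridge functions, taking $\psi$ constant, so there is no missing bias term.) Since $\Fenc$ is closed under addition, once all elementary ridge functions are available so are all finite sums $\sum_{j}\sum_k\psi_{j,k}(\ab_{j,k}^\top\xb)\,\eb_j$; approximating each coordinate $F_j$ of a target $\Fb\in C(K,\R^d)$ to accuracy $\eps/d$ by such a sum then yields an approximation of $\Fb$ to accuracy $\eps$, which is the claim.

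Two structural facts do the work. First, $GL(d)\subseteq\Fenc$: a linear map is additive since $\Lb\zb=\sum_i z_i\,\Lb_{:,i}$, so any $\Lb\in GL(d)$ lies in $\Fadd$ and is bijective, whence $\Lb=(\Lb^{-1})^{-1}\in\Fadd^{-1}\subseteq\Fenc$. Applying the same observation to $\fb\circ(-\id)$, which is again additive, shows that $\Fadd^{-1}$ is closed under negation, so that $\Fenc$ --- being closed under addition and containing $\Fadd^{-1}$ --- is in particular closed under subtracting any element of $\Fadd^{-1}$ from any of its own elements. Second, $\Fadd^{-1}$ is closed under precomposition with $GL(d)$: if $\gb=\fb^{-1}$ with $\fb\in\Fadd$ and $\Lb\in GL(d)$, then $\gb\circ\Lb=(\Lb^{-1}\circ\fb)^{-1}$, and $\Lb^{-1}\circ\fb$ is again additive because $\Lb^{-1}\sum_i\fb_i(z_i)=\sum_i(\Lb^{-1}\fb_i)(z_i)$; hence $\gb\circ\Lb\in\Fadd^{-1}\subseteq\Fenc$.

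With these in hand, assume first $d\geq2$ and fix $i\neq j$ in $[d]$. For $\psi\in C(\R)$ the shear $\fb^{(\psi)}(\zb)=\zb+\psi(z_i)\,\eb_j$ is additive --- its $i$-th additive component is $t\mapsto t\eb_i+\psi(t)\eb_j$ and the remaining components are $t\mapsto t\eb_l$ --- and it is a bijection with inverse $\gb^{(\psi)}\colon\xb\mapsto\xb-\psi(x_i)\,\eb_j$ (solve coordinate-wise, using $i\neq j$). Thus $\gb^{(\psi)}\in\Fadd^{-1}\subseteq\Fenc$, and subtracting $\gb^{(\psi)}$ from the identity (which lies in $GL(d)\subseteq\Fenc$) gives $\xb\mapsto\psi(x_i)\,\eb_j$ in $\Fenc$. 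Given any nonzero $\ab\in\R^d$, choose $\Lb\in GL(d)$ whose $i$-th row is $\ab^\top$ (complete $\ab$ to a basis); then $\Lb\in\Fenc$ and $\gb^{(\psi)}\circ\Lb\in\Fadd^{-1}\subseteq\Fenc$, and their difference is exactly $\xb\mapsto\psi(\ab^\top\xb)\,\eb_j$, so this lies in $\Fenc$. Ranging over $j$, over $\ab\neq\vzero$ and over $\psi\in C(\R)$ produces all elementary ridge functions. The case $d=1$ is handled directly: there $\Fadd^{-1}$ is the set of strictly increasing continuous surjections of $\R$ together with their negatives; given $F\in C([a,b])$, pick a polynomial $p$ with $\|F-p\|<\eps$ and $M>\max_{[a,b]}|p'|$, and note that $t\mapsto p(t)+Mt$ and $t\mapsto Mt$ extend to strictly increasing continuous surjections of $\R$ whose difference equals $p$ on $[a,b]$, so $p$, and hence $F$ up to $\eps$, lies in $\Fenc$.

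The step I expect to be the crux is the second structural fact: precomposition with an invertible linear map keeps a function inside $\Fadd^{-1}$ precisely because additivity is preserved under left-multiplication by a matrix --- whereas right-precomposition of an additive map by a matrix, or composition of two additive maps, generally destroys additivity. Without this observation one is confined to coordinate-wise nonlinearities together with linear maps, a class nowhere near dense; with it, the single shear $\xb\mapsto\xb-\psi(x_i)\,\eb_j$ already supplies every ridge direction. A secondary subtlety worth flagging is that the hypothesis gives closure only under addition, not under scalar multiplication, which is why the argument extracts exact members of $\Fadd^{-1}$ and subtracts the identity rather than rescaling; for $d\geq2$ this costs nothing, and for $d=1$ it is exactly the reason one writes the target as a difference of two monotone surjections.
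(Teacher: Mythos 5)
Your proof is correct, and it follows a route that is close in spirit to the paper's but packaged more cleanly. Both arguments hinge on the same core device: shear maps of the form $\fb(\zb)=\zb+\psi(z_i)\eb_j$ (with $i\neq j$) are additive and invertible, with inverse $\gb(\xb)=\xb-\psi(x_i)\eb_j$, so subtracting such a $\gb$ from the identity (or from $\Lb\in GL(d)$ after a linear change of variables) isolates a ridge function $\psi(\ab^\top\xb)\eb_j$ in $\Fenc$. The paper's Steps~2--3 are exactly this construction, but specialised to $\psi(t)=t^k$ and to its negated twin, which are then summed (their version of ``subtract from the identity''). Where the two arguments diverge is in how they finish: the paper remains self-contained, proving the polarization identity (Lemma~\ref{le:density}) to show that the span of $\{(\ab^\top\xb)^k\}$ is all $k$-homogeneous polynomials and then invoking Stone--Weierstrass, whereas you offload the endgame to the classical density of sums of ridge functions. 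You also handle $d=1$ explicitly, which the paper's Step~2 (which uses a second coordinate $z_2$) silently excludes. Finally, you bypass the paper's Step~1 (the scaling trick $\fb\circ M_\lambda$ showing closure under scalar multiplication): by folding the scalar into $\psi$, you only ever need closure under addition plus the observed closure of $\Fadd^{-1}$ under negation, which is a nice simplification.

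Two small points worth tightening. First, the generators $\fb\in\Fadd$ the paper has in mind are diffeomorphisms, so the shear $\fb^{(\psi)}$ really needs $\psi$ to be at least $C^1$ (polynomial suffices); this does not harm your conclusion, since polynomial ridge functions already span all polynomials and hence are dense, but the phrasing ``for $\psi\in C(\R)$'' should be narrowed. Second, ``implied by the universal approximation theorem'' is slightly misleading: the density of the span of $\{\psi(\ab^\top\xb)\}$ over all $\psi$ and $\ab$ is more elementary than the fixed-activation universal approximation theorem and is most directly seen via the polarization identity plus Stone--Weierstrass, which is precisely what the paper's Lemma~\ref{le:density} provides; citing that lemma or the result of Vostrecov--Kreines would be cleaner.
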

Since $\Fadd\subset \Fint$ for $n\geq 1$ and any $m$ we directly get the following corollary.
\begin{corollary}
Assume $d_x=d_z=d$ and the encoder function class $\Fenc$ is closed under addition and satisfies $\Gint\subset \Fenc$. Then $\Fenc$ is dense in the space of all continuous functions on all compact subset of $\RR^{d_x}$. 
\end{corollary}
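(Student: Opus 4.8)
The plan is to derive this corollary as an immediate consequence of Theorem~\ref{th:additive_inverse2}; essentially all the mathematical content has already been established there, and what remains is only to check that the two structural hypotheses of that theorem are implied by the hypotheses of the corollary. Recall that Theorem~\ref{th:additive_inverse2} requires an encoder class $\Fenc$ that (i) is closed under addition and (ii) contains $\Fadd^{-1}$, the set of inverses of the (invertible) coordinate-wise additive maps of Definition~\ref{def:add}; under these conditions it concludes that $\Fenc$ is dense in the space of continuous functions on every compact subset of $\RR^{d_x}$, which is exactly the assertion of the corollary.

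Hypothesis (i) is assumed verbatim in the corollary. For hypothesis (ii), I would argue via the chain of set inclusions $\Fadd^{-1}\subseteq\Gint\subseteq\Fenc$. The second inclusion is one of the corollary's standing assumptions. For the first, recall that $\Fadd$ coincides with $\Fint$ in the case $n=m=1$: taking $n=1$ in \eqref{eq:interact_func} makes the polynomial sum $\sum_{\alphab:|\alphab|\leq 1}\cbb_{\alphab}\zb^{\alphab}$ affine, so it can be absorbed into the slot-wise terms $\fb^k$, leaving a coordinate-wise additive map; conversely every additive map of the form in Definition~\ref{def:add} has this shape. More generally, as noted after Definition~\ref{def:add}, $\Fadd\subseteq\Fint$ for every $n\geq 1$ and any slot dimension $m$. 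Applying the inversion map $\fb\mapsto\fb^{-1}$ to both sides of $\Fadd\subseteq\Fint$ preserves the inclusion (each side consists of diffeomorphisms, bijective on the codomain in the $d_x=d_z$ setting), giving $\Fadd^{-1}\subseteq\{\fb^{-1}\mid\fb\in\Fint\}=\Gint$, as required.

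With both hypotheses of Theorem~\ref{th:additive_inverse2} verified and the dimension assumption $d_x=d_z=d$ matching that theorem's setting, I would simply invoke it to conclude that $\Fenc$ is dense in the space of continuous functions on every compact subset of $\RR^{d_x}$. I do not expect any genuine obstacle here: the only points requiring care are bookkeeping, namely confirming that the symbol $\Fadd^{-1}$ in the statement of Theorem~\ref{th:additive_inverse2} refers precisely to the inverses of the invertible members of $\Fadd$ (so that $\Fadd^{-1}\subseteq\Gint$ is an inclusion between objects of the same type, using the convention $\Gint:=\{\fb^{-1}\mid\fb\in\Fint\}$), and that no additional regularity beyond what $\Gint$ already carries is needed. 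The substantive claim — that closure under addition together with containing all additive inverses forces the universal approximation property — lies entirely in the proof of Theorem~\ref{th:additive_inverse2} and is not re-derived here.
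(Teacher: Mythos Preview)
Your proposal is correct and matches the paper's own argument exactly: the corollary is stated immediately after the one-line observation ``Since $\Fadd\subset \Fint$ for $n\geq 1$ and any $m$ we directly get the following corollary,'' which is precisely your chain $\Fadd^{-1}\subseteq\Gint\subseteq\Fenc$ followed by invoking Theorem~\ref{th:additive_inverse2}.
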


The takeaway from these results is that it is challenging to find natural function classes $\Fenc$ so that $\Fenc\supset \Gint$ (sufficient expressivity) but $\Fenc$ is not much larger than $\Gint$. Therefore, learning only encoders from $\Fenc$  does not provide a strong inductive bias towards the inverse of the ground truth decoder and out of distribution generalization.
\begin{proof}[Proof of Theorem~\ref{th:additive_inverse2}]
    The general strategy is to prove that the conditions imply that all maps $\gb$ where $\gb_j$ (the $j$-th coordinate of $\gb$) is any polynomial
    and $\gb_i=0$ for $i\neq j$ are contained in $\Fenc$.
    This will end the proof because polynomials are dense in the scalar valued continuous functions, and we can then apply this result coordinate-wise using the additive structure.

    \textbf{Step 1:} Vector space structure of $\Fenc$.
    We now show that we can scale certain functions in $\Fenc$.
    Denote for $\lambda\neq 0$ by $\Mb_\lambda$ the multiplication map $z\to \lambda z$. 
    Then $\fb\circ \Mb_\lambda\in \Fadd$
    if $\fb\in \Fadd$.  Since
    $(\fb\circ \Mb_\lambda)^{-1}=\lambda^{-1}\fb^{-1}$ we conclude that scalar multiples of $\fb^{-1}$ are in $\Fadd^{-1}$ and the first   assumption then implies that
    the vector space $V$ generated by $\fb^{-1}$ for $\fb\in \Fadd$ is contained
    in $\Fenc$. 

    \textbf{Step 2:} 
    We show that the monomials $x_i^k$ are contained in $\Fenc$.
    Consider the map $\fb\in \Fadd$ where
    $\xb=\fb(\zb)$ has coordinates
    \begin{align}
    \begin{split}
        \xb_1 &= \zb_2^k+\zb_1\\
        \xb_i &= \zb_i\quad \text{for $d\geq i\geq 2$.}
    \end{split}
    \end{align}
    This is clearly an additive function with inverse
    \begin{align}
        \begin{split}
            \zb_1 &= \xb_1 - \xb_2^k\\
        \zb_i &= \xb_i\quad \text{for $d\geq i\geq 2$.}
        \end{split}
    \end{align}
    Similarly, we consider  
    \begin{align}
    \begin{split}
        \xb_1 &= -(-\zb_2)^k-\zb_1\\
        \xb_i &= - \zb_i\quad \text{for $d\geq i\geq 2$.}
    \end{split}
    \end{align}
    with inverse
   \begin{align}
        \begin{split}
            \zb_1 &= - \xb_1 - \xb_2^k\\
        \zb_i &= - \xb_i\quad \text{for $d\geq i\geq 2$.}
        \end{split}
    \end{align}
    Summing these two functions, we find that the function
    $\gb$ with
    \begin{align}
        \gb_i(x)=-2\delta_{1i} \xb_1^k
    \end{align}
    satisfies $\gb\in \Fenc$. By permutation of the outputs and inputs (and scaling) we find that all functions $\gb$
    with $\gb_{i}(\xb)=\delta_{ij} \xb_l^k$ are in $\Fenc$
    for all $j,l\in [d]$ and $k\in \mathbb{N}$.

    \textbf{Step 3:}
    Now we show with a similar argument that more generally functions of the form $\gb_j(\xb)=\delta_{jl}(\sum_{i=1}^d \alphab_i \xb_i)^k$ for all coefficients $\alphab_i$ and all $1\leq l\leq d$ are in $\Fenc$.
    If only one $\alphab_i$ is non-zero we have shown this before, so we can assume
   that at least two $\alphab_i$ are non-zero and without loss of generality we assume that $\alphab_i$ for $1\leq i\leq k$ are non-zero where $2\leq k\leq d$.
    Then we consider the additive map
    $\gb$ which satisfies for $\xb=\gb(\zb)$
    \begin{align}
    \begin{split}\label{eq:inv1}
        \xb_1 &= \frac{1}{\alphab_1} \left( \zb_1^k+\zb_1 -\sum_{i=2}^k \zb_i\right),\\
        \xb_2 &= \frac{1}{\alphab_2}(\zb_2-\zb_1^k),\\
        \xb_i &= \frac{1}{\alphab_i} \zb_i\qquad \text{for $3\leq i\leq k$,}\\
        \xb_i &= \zb_i\quad \text{for $k<i\leq d$.}
    \end{split}
    \end{align}
    Then we observe that
    \begin{align}
        \sum_{i=1}^k \alphab_i \xb_i= \zb_1
    \end{align}
    and thus the inverse is given by
    \begin{align}
        \begin{split}
           \zb_1 &=   \sum_{i=1}^k \alphab_i \xb_i,
           \\
           \zb_2 &= \alphab_2 \xb_2 +\left( \sum_{i=1}^k \alphab_i \xb_i\right)^k,
           \\
           \zb_i &=\alphab_i \xb_i \quad \text{for $3\leq i\leq k$}
           \\
           \zb_i &=\xb_i \qquad \text{for $d\geq i>k$.}
        \end{split}
    \end{align}
    Similarly, we find that the inverse of the additive map given in coordinates by
      \begin{align}
    \begin{split}
        \xb_1 &= \frac{1}{\alphab_1} \left( -(-\zb_1)^k-\zb_1 +\sum_{i=2}^k \zb_i\right),\\
        \xb_2 &= \frac{1}{\alphab_2}(-\zb_2+(-\zb_1)^k),\\
        \xb_i &= -\frac{1}{\alphab_i} \zb_i\qquad \text{for $3\leq i\leq k$,}\\
        \xb_i &= \zb_i\quad \text{for $k<i\leq d$.}
    \end{split}
    \end{align}
    can be written as (note $\sum_{i=1}^k \alphab_i\xb_i=-\zb_1$)
    \begin{align}
        \begin{split}\label{eq:inv2}
           \zb_1 &=  - \sum_{i=1}^k \alphab_i \xb_i,
           \\
           \zb_2 &= -\alphab_2 \xb_2 +\left( \sum_{i=1}^k \alphab_i \xb_i\right)^k,
           \\
           \zb_i &=-\alphab_i x_i \quad \text{for $3\leq i\leq k$}
           \\
           \zb_i &=-\zb_i \qquad \text{for $d\geq i>k$.}
        \end{split}
    \end{align}
    Summing the two inverses in \eqref{eq:inv1} and \eqref{eq:inv2} we find that the map
    $\gb$ given by 
    $\gb_j(\xb)=2\delta_{j2}\left( \sum_{i=1}^k \alphab_i \xb_i\right)^k$ is in $\Fenc$
    and by permuting the indices and scaling we find that all maps of the form
    \begin{align}
        \gb_j(x)=\delta_{jl} \left( \sum_{i=1}^k \alphab_i \xb_i\right)^k
    \end{align}
    are in $\Fenc$.
    Using Lemma~\ref{le:density} stated below we infer that indeed all multinomial polynomials are in $\Fenc$ and this ends the proof in light of the Stone-Weierstrass Theorem.
\end{proof}

The following technical but standard lemma was used in the proof of Theorem~\ref{th:additive_inverse2}.
\begin{lemma}\label{le:density}
    Consider the space of functions $\gb_{\alphab}:\mathbb{R}^d\to \mathbb{R}$
    for $\alphab\in \mathbb{R}^d$ given by
    \begin{align}
        \gb_{\alpha} (\xb) 
        = \left(\sum_{i=1}^d \alphab_i \xb_i\right)^k.
    \end{align}
    Then the vector space generated by the functions $\gb_{\alphab}$ is the space of all $k$-homogeneous polynomials.
\end{lemma}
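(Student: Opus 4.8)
The plan is to argue by duality. Let $V_k$ denote the vector space of all $k$-homogeneous polynomials in $d$ real variables. Since each $\gb_{\alphab}$ is visibly $k$-homogeneous, the subspace $W \subseteq V_k$ generated by $\{\gb_{\alphab} : \alphab \in \RR^d\}$ is contained in $V_k$, and the content of the lemma is the reverse inclusion $W = V_k$. For this it suffices to show that every linear functional $\phi : V_k \to \RR$ that annihilates all of the $\gb_{\alphab}$ is the zero functional.

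First I would expand $\gb_{\alphab}$ in the monomial basis via the multinomial theorem: using multi-indices $\betab \in \NN_0^d$ with $|\betab| = k$, one has $\gb_{\alphab}(\xb) = \sum_{|\betab|=k} \binom{k}{\betab}\,\alphab^{\betab}\,\xb^{\betab}$, where $\binom{k}{\betab} = k!/(\beta_1!\cdots\beta_d!) \neq 0$. A linear functional $\phi$ on $V_k$ is determined by the scalars $a_{\betab} := \phi(\xb^{\betab})$, so by linearity $\phi(\gb_{\alphab}) = \sum_{|\betab|=k} a_{\betab}\binom{k}{\betab}\,\alphab^{\betab}$.

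The key observation is that this last expression is itself a ($k$-homogeneous) polynomial in the variable $\alphab \in \RR^d$. If $\phi$ kills every $\gb_{\alphab}$, then this polynomial in $\alphab$ vanishes identically on $\RR^d$; since distinct monomials $\alphab^{\betab}$ are linearly independent as functions on $\RR^d$ (a polynomial over an infinite field that vanishes everywhere has all coefficients zero), we get $a_{\betab}\binom{k}{\betab} = 0$, hence $a_{\betab} = 0$ for all $\betab$ with $|\betab| = k$. Therefore $\phi = 0$, and consequently $W = V_k$, which is the claim.

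I do not expect a genuine obstacle: this is a classical statement. The only step needing a touch of care is the final one, where one invokes that a polynomial identity over $\RR$ forces all coefficients to vanish — true over any infinite field, which is all we need. An alternative, more hands-on route would be to write down an explicit polarization (finite-difference) identity expressing each monomial $\xb^{\betab}$ as a signed sum of powers $(\alphab\cdot\xb)^k$, but the duality argument is shorter and avoids the combinatorial bookkeeping.
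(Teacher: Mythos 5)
Your argument is correct, but it takes a genuinely different route from the one in the paper. You reduce the claim to a statement about the annihilator of the span $W$: by expanding $(\alphab\cdot\xb)^k$ via the multinomial theorem and observing that, for any linear functional $\phi$ on the space of $k$-homogeneous polynomials, the map $\alphab \mapsto \phi(\gb_{\alphab})$ is itself a polynomial in $\alphab$ whose coefficients are (up to the nonzero multinomial coefficients) the values $\phi(\xb^{\betab})$; identical vanishing of this polynomial then forces $\phi = 0$. The paper instead proves the result \emph{constructively} via an explicit polarization identity: it shows by a sign-averaging (inclusion--exclusion over $\{-1,1\}^k$) computation that
\begin{align}
\sum_{\eps\in\{-1,1\}^k}\Bigl(\prod_{j=1}^k \eps_j\Bigr)\Bigl(\sum_{j=1}^k \eps_j\phib_j\Bigr)^k = 2^k\,k!\,\prod_{j=1}^k \phib_j,
\end{align}
so every product of $k$ linear forms, hence every monomial, is an explicit linear combination of $k$-th powers. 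Your duality proof is shorter and conceptually cleaner, trading the combinatorial bookkeeping for the standard fact that a real polynomial vanishing identically has all coefficients zero; the paper's proof buys an explicit finite formula witnessing the spanning, which you acknowledge as the alternative route you chose to avoid. Both are sound.
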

\begin{proof}
This is a general version of the well known polarization identity, namely
\begin{align}
    (\xb_1+\xb_2)^2-(\xb_1-\xb_2)^2=4\xb_1\xb_2.
\end{align}
For completeness we sketch the full proof.
Denote the generated space by $V$.
Let $\phib_j(\xb)$ be linear functions for $1\leq j\leq k$, i.e., $\phib_j(\xb)=\sum_{i=1}^d \alphab_i^j\xb_i$
for some $\alphab_i^j$.
Then using the multnomial expansion we find
\begin{align}
\begin{split}
    \sum_{(\eps_1,\ldots, \eps_k)\in \{-1,1\}^k}
    \left(\prod_{j=1}^k\eps_j\right) &\left(\sum_{j=1}^k \eps_j \phib_j\right)^k
    \\
    &= \sum_{(\eps_1,\ldots, \eps_k)\in \{-1,1\}^k}
    \left(\prod_{j=1}^k \eps_j\right) 
    \sum_{\gammab_1+\ldots+\gammab_k=k}
    \frac{k!}{\gammab_1!\cdot\ldots\cdot\gamma_k!}\prod_{i=j}^k \phib_j^{\gammab_j}
   \\
   &=
    \sum_{(\eps_1,\ldots, \eps_k)\in \{-1,1\}^k}
    \left(\prod_{i=j}^k \eps_j\right) 
    \sum_{\gammab_1+\ldots+\gammab_k=k}
    \frac{k!}{\gammab_1!\cdot\ldots\cdot\gammab_k!}\prod_{j=1}^k (\eps_j\phib_j)^{\gammab_j}
  \\
  &= \sum_{\gammab_1+\ldots+\gammab_k=k}
     \frac{k!}{\gammab_1!\cdot\ldots\cdot\gammab_k!}
     \prod_{j=1}^k \phib_j^{\gammab_j}\prod_{j=1}^k\sum_{\eps_j\in \{-1,1\}} \eps_j^{\gammab_j+1}.   
    \end{split}
\end{align}
Now the last sum is 0 for $\gammab_j$ even and 2 for $\gammab_j$ odd. So the only non-zero term corresponds to all $\gammab_j$ odd and thus $\gamma_j=1$ for all $j$ and therefore
\begin{align}
\begin{split}
 \sum_{(\eps_1,\ldots, \eps_k)\in \{-1,1\}^k}
    \left(\prod_{j=1}^k\eps_j\right) \left(\sum_{j=1}^k \eps_j \alphab_j\right)^k
   & =2^k k!  \prod_{j=1}^k \phib_j\in V.
    \end{split}
\end{align}
Clearly this implies that the monomials
$\prod_{i=1}^d \xb_i^{\betab_i}$ with $\betab_i\geq 0$ and $\sum_{i=1}^d \betab_i=k$ are generated by the functions $\gb_{\alphab}$ (pick $\phib_j(\xb)=\xb_i$ for $\betab_i$ of the $\phib_j$).
\end{proof}

\paragraph{Architectural constraints for $d_x>d_z$.}

The case $d_x>d_z$ is more challenging because the data manifold is then a submanifold and even if we know that there is a $\gb\in \Fenc$ inverting $\fb$ on the data manifold (i.e., $\Fenc$ is sufficiently expressive) this provides (essentially) no information about $\gb$
away from $\mc{X}=\fb(\mc{Z})$ and the data manifolds for different generators are unrelated. However, we can leverage the result for $d_x=d_z$ to obtain a weaker version in the general case. 
Here we make the additional assumption that 
$\Fenc$ is closed under coordinate projections in the sense that
 $\tilde{\gb}\in \Fenc$
if $\tilde{\gb}(\xb)=\gb(\xb_{I},\bs{0}_{[d_x]\setminus I})$
for some index set $I\subset [d_x]$ and $\gb\in \Fenc$. Note that this is naturally satisfied for neural networks 
where we can remove the influence of a coordinate by zeroing its outgoing weights.
\begin{corollary}
    Assume that $\Fenc$ is a class of encoder functions such that $\Fenc$ is closed under addition and coordinate projections
    and sufficiently expressive, i.e.,
    for every $\fb\in \Fint$ there is $\gb\in \Fenc$ such that $\gb\circ\fb=\mathrm{id}$.
    Let $\fb\in\Fint$ be such that $\fb_I$ is a diffeomorphism (on its image) for some $I$ with $|I|=d_z$.
    Then $\Fenc\circ \fb$ is dense in all continuous functions $C(K,\R^{d_z})$ for every compact $K\subset \R^{d_z}$, i.e., essentially arbitrary representations can be learned using function in $\Fenc$.
\end{corollary}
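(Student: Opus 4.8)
The plan is to reduce the claim to the equidimensional statement Theorem~\ref{th:additive_inverse2} by pulling everything back through the distinguished coordinate set $I$. Let $\iota_I\colon\R^{d_z}\to\R^{d_x}$ be the linear embedding placing its argument in the coordinates indexed by $I$ and $\bs{0}$ elsewhere, so that $(\fb(\zb))_I=\fb_I(\zb)$ for every $\zb$, and so that for any $\gb\in\Fenc$ the map $\tilde\gb(\xb):=\gb(\xb_I,\bs{0}_{[d_x]\setminus I})=\gb(\iota_I(\xb_I))$ again lies in $\Fenc$ by closure under coordinate projections. Introduce the auxiliary class $\mathcal{H}:=\{\,\gb\circ\iota_I:\gb\in\Fenc\,\}\subseteq C(\R^{d_z},\R^{d_z})$. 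Since $\iota_I$ is a fixed linear map, $(\gb_1\circ\iota_I)+(\gb_2\circ\iota_I)=(\gb_1+\gb_2)\circ\iota_I$, so $\mathcal{H}$ inherits closure under addition from $\Fenc$.

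Next I would verify that $\mathcal{H}$ satisfies the expressivity hypothesis of Theorem~\ref{th:additive_inverse2} in ambient dimension $d_z$, i.e.\ $\Fadd^{-1}\subset\mathcal{H}$, where $\Fadd^{-1}$ now denotes inverses of additive diffeomorphisms of $\R^{d_z}$. Given such an $\ab\colon\R^{d_z}\to\R^{d_z}$, the composite $\tilde\fb:=\iota_I\circ\ab\colon\R^{d_z}\to\R^{d_x}$ is coordinate-wise additive (Definition~\ref{def:add}) and a diffeomorphism onto its image $\iota_I(\R^{d_z})$; hence $\tilde\fb\in\Fadd\subseteq\Fint$. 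By the expressivity hypothesis on $\Fenc$ there is $\gb\in\Fenc$ with $\gb\circ\tilde\fb=\id_{d_z}$, i.e.\ $\gb\circ\iota_I\circ\ab=\id_{d_z}$; since $\ab$ is onto this forces $\gb\circ\iota_I=\ab^{-1}$, so $\ab^{-1}\in\mathcal{H}$. With both hypotheses verified, Theorem~\ref{th:additive_inverse2} gives that $\mathcal{H}$ is dense in $C(K',\R^{d_z})$ for every compact $K'\subset\R^{d_z}$.

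It remains to transport this density through $\fb_I$. For $\hb=\gb\circ\iota_I\in\mathcal{H}$ one has $\hb(\fb_I(\zb))=\gb\big(\iota_I(\fb_I(\zb))\big)=\gb\big((\fb(\zb))_I,\bs{0}_{[d_x]\setminus I}\big)=\tilde\gb(\fb(\zb))$ with $\tilde\gb\in\Fenc$, so $\{\hb\circ\fb_I:\hb\in\mathcal{H}\}\subseteq\Fenc\circ\fb$. Now fix a compact $K\subset\R^{d_z}$, a target $\vq\in C(K,\R^{d_z})$ and $\eps>0$, and set $K':=\fb_I(K)$, which is compact. Because $\fb_I$ is a diffeomorphism on its image, $\fb_I|_K\colon K\to K'$ is a homeomorphism and $\vq\circ(\fb_I|_K)^{-1}\in C(K',\R^{d_z})$; density of $\mathcal{H}$ yields $\hb\in\mathcal{H}$ with $\sup_{\wb\in K'}\|\hb(\wb)-\vq((\fb_I|_K)^{-1}(\wb))\|<\eps$, and substituting $\wb=\fb_I(\zb)$ gives $\sup_{\zb\in K}\|\hb(\fb_I(\zb))-\vq(\zb)\|<\eps$. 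Since $\hb\circ\fb_I\in\Fenc\circ\fb$, this establishes that $\Fenc\circ\fb$ is dense in $C(K,\R^{d_z})$.

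The step needing the most care is the second one: one must check that the auxiliary generators $\tilde\fb=\iota_I\circ\ab$ genuinely lie in $\Fint$ — they are coordinate-wise additive and diffeomorphic onto their image, hence in $\Fadd\subseteq\Fint$ — so that the expressivity hypothesis legitimately applies to them, and that passing to $\mathcal{H}$ via the coordinate projection simultaneously preserves the additive structure and the identity $\hb\circ\fb_I=\tilde\gb\circ\fb$ with $\tilde\gb\in\Fenc$. The remaining pieces are routine: closure of $\mathcal{H}$ under addition is immediate from linearity of $\iota_I$, and the density transfer is just the standard fact that precomposition by the fixed homeomorphism $\fb_I|_K$ is an isometric bijection on spaces of continuous functions equipped with the uniform norm.
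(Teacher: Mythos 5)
Your proof is correct and follows essentially the same reduction the paper uses: embed $\R^{d_z}$ into the $I$-coordinates, apply Theorem~\ref{th:additive_inverse2} to the pulled-back class $\mathcal{H}=\{\gb\circ\iota_I : \gb\in\Fenc\}$, and transport density through the homeomorphism $\fb_I|_K$. You are somewhat more explicit than the paper in verifying that $\mathcal{H}$ inherits closure under addition and contains $\Fadd^{-1}$ (the paper compresses this into a one-line remark about a bijection with $\Fint$ on $\R^{d_z}$), but the underlying argument and the role of the coordinate-projection hypothesis are identical.
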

\begin{proof}
Consider a set $I\subset [d_z]$.
Then the restrictions $\fb_I$ of functions $\fb\in \Fint$ such that 
    $\fb_{I^c}(\zb)=\bs{0}$
    (i.e., functions that vanish in all but $d_z$ coordinates) are in bijection to functions in $\Fint$ mapping $\R^{d_z}\to\R^{d_z}$.
    Applying Theorem~\ref{th:additive_inverse2} we therefore find that the set of functions
    $\zb_I\to \gb(\zb_I,\bs{0}_{I^c})$ for $\gb\in \Fenc)$ is dense in the continuous functions defined on any compact set $K'$.
It is convenient to introduce the shorthand $\bar{\gb}$ for the function $\zb_I\to \gb(\zb_I,\bs{0}_{I^c})$ by $\bar{\gb}$.
    Then we can restate the density statement before as follows: Given any continuous function $\hb:K\to \R^{d_z}$ we can find 
    for any $\eps>0$ a
    $\gb\in \Fenc$ so that 
    $\lVert \bar{\gb} - \hb\circ (\fb_I)^{-1}\rVert <\eps$ on the compact set $K'=\fb_{I}(K)$
    (here we use that $\fb_I$ is bijective on its image to invert it).
    Using that $\Fenc$ is closed under coordinate projections we can find $\tilde{\gb}$ is in $\Fenc$ and satisfies
    \begin{align}
    \begin{split}
        \max_{\zb\in K}
        \lVert \tilde{\gb}\fb(\zb)-\hb(\zb)\rVert_\infty
       & =\lVert \gb(\fb_I(\zb), \bs{0}_{I^c})-
        \hb\circ (\fb_I)^{-1}\circ \fb_I(\zb)\rVert_\infty
       \\
       &=\lVert \bar{\gb}(\fb_I(\zb))
        - \hb\circ (\fb_I)^{-1}(\fb_I(\zb))\rVert_\infty
       \\
       &\leq \max_{\xb_I\in K'}
        \lVert \bar{\gb}(\xb_I)
        - \hb\circ (\fb_I)^{-1}(\xb_I)\rVert_\infty.
        \end{split}
    \end{align}
    This ends the proof.

\end{proof}
While the previous corollary makes the strong assumption that $\fb_I$ is globally bijective we note that this is generally true at least locally.
Moreover, we can patch such representations  due to the additive structure.
Therefore,  it seems unlikely that a  function class $\Fenc$ 
satisfying the following three constraints exists:
First, the function class $\Fenc$ is expressive enough, i.e., it contains left inverses for all $\fb\in \Fint$. Secondly, $\Fenc$ is not too expressive so it provides a useful inductive bias towards $\Gint$. And finally, $\Fenc$ can be efficiently parametrized and used for optimization.
\section{Additional Experimental details}
\label{sec:exp_detail_app}

   \begin{figure}
   \begin{center}
    \includegraphics[width=0.36\textwidth]{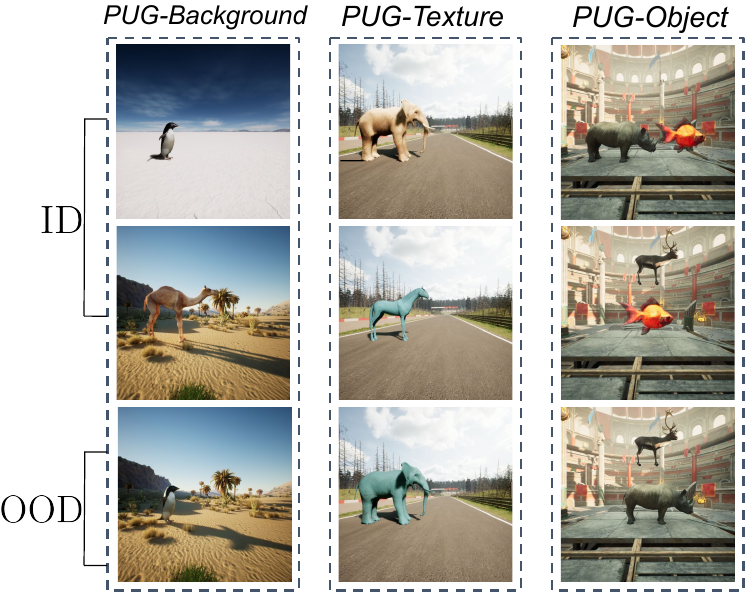}
    \vspace{-5pt}
    \caption{ID and OOD images from each dataset used in our experiments.}
    \label{fig:pug_data}
    \end{center}
\end{figure}

\textbf{Datasets.} We create datasets for our experiments in Sec.~\ref{sec:experiments} based on the PUG: Animals dataset~\citep{bordes2023pug}. This data consists of 43,520 high-resolution images which we resize to $224\times224\times3$. Images consist of two animals from 32 categories which can occupy four different positions (left, right, up, down). Each animal can take 5 different colors/textures (standard, blue, red, grass, stone). These animals are then placed on a background which can take 10 different values. 

To create PUG-Background, we create an OOD set containing 32,000 images which consist of unseen combinations of animal category and backgrounds, e.g., penguin in a desert in~\cref{fig:pug_data}, and a corresponding ID set containing 11,520 images. For PUG-Texture, the OOD set contains 16,000 images consisting of unseen combinations of animal and texture/color, e.g. blue elephant in~\cref{fig:pug_data}, and the ID set contains 27,520 images. Lastly, for PUG-Object, the ID and OOD set both contain 21,760 images. The OOD set here consists of unseen combinations of animal categories, e.g., rhinoceros and caribou in~\cref{fig:pug_data}.

\subsection{Models}
\label{sec:app_models}
\textbf{ViTs.}
The exact architecture for the six different pretrained ViTs used in our experiments can be seen in~\cref{fig:pretrain_fig}. When fine tuning these models with a LORA adapter, we use a rank of 16, a scaling factor of 32, and a dropout value of 0.1.

\textbf{Slot encoders.}
We use either a Transformer or Slot Attention model for the  slot encoder in Sec.~\ref{sec:experiments}. Both models consist of 5 layers and use 3 slots of dimension 64. We use 4 attention heads in the Transformer.

\textbf{Decoders.}
All decoders in our experiments use the cross-attention Transformer from~\citet{brady2025interaction,van2024moving, sajjadi2022scene}. This decoder first projects slots using a 2 layer slot-wise MLP with a hidden dimension of 2064 and then passed through a 2 layer cross-attention Transformer with pixel queries. Pixels are tokenized using a 2 layer MLP with a hidden dimension of 720. The outputs of the Transformer are mapped down to a channel dimension of 3 using a 3 layer MLP with a hidden dimension of 360. The attention weights $\Ab$ in the Transformer are regularized such that pixels are encouraged to attend to at most one slot using the regularizer introduced by~\citet{brady2025interaction} weighted by a value of $0.01$:
\begin{equation}
\label{eq:l_interac_app}
\mathcal{L}_{\text{interact}} := \EE \sum_{l \in [d_x]}\sum_{j \in [K]}
\sum_{k=j+1}^K
\Ab_{l, j}\Ab_{l, k}.
\end{equation}
\vspace{-0.7cm}
\subsection{Training Objectives}
\textbf{Supervised models.}
We train all supervised models for 100000 iterations across 3 random seeds using a batch size of 64, with the Adam optimizer~\citep{kingma2015adam} and a learning rate of 1e-4.

\textbf{VAEs.}
We train all unsupervised VAE models for 300000 iterations across 3 random seeds using a batch size of 32, with the Adam optimizer~\citep{kingma2015adam} and a learning rate of 5e-4, which is decayed by a factor of .1 throughout training and warmed up for the first 10000 iterations. We use a value of either 0.005 or 0.001 for the hyperparameter $\beta$ on the KL loss.

\textbf{Readout.}
In our unsupervised experiments, we train a linear readout on learned slots for 7500 iterations. To resolve the permutation between inferred and ground-truth slots we rely on the Hungarian matching procedure used in~\citet{dittadi2021generalization,locatello2020object}.

\textbf{Gradient-based search.}
When performing gradient-based search in our experiments, we optimize Eq.~\ref{eq:inf_opt} using Adam with a learning rate of .001. We optimize for either 300 or 500 iterations on PUG-Background and 700 iterations on PUG-Texture. To further aid in optimization we add an additional regularizer to the optimization procedure which minimizes the entropy of the logits under the classifier. This aims to ensure that the search procedure yields latent slots which are within the set of slots which the classifier has already observed. We use a value of either 10 or 50 for this loss. We note that a similar loss was used for semi-supervised learning in~\citet{NIPS2004_96f2b50b}.

\textbf{Generative replay.}
For our experiments using generative replay, we generate OOD data by following the procedure in~\citet{wiedemer2024provable} in which ID slots are randomly shuffled to create novel OOD compositions. We train an encoder on batches of 64 OOD samples for 15000 iterations with a learning rate of 5e-4.

\textbf{Compute.}
We train all models using 2 NVIDIA A100 GPUs.
Total training time was approximately 1500 GPU hours.

\newpage

\begin{figure*}[t]
    \centering
    \includegraphics[width=0.5\textwidth]{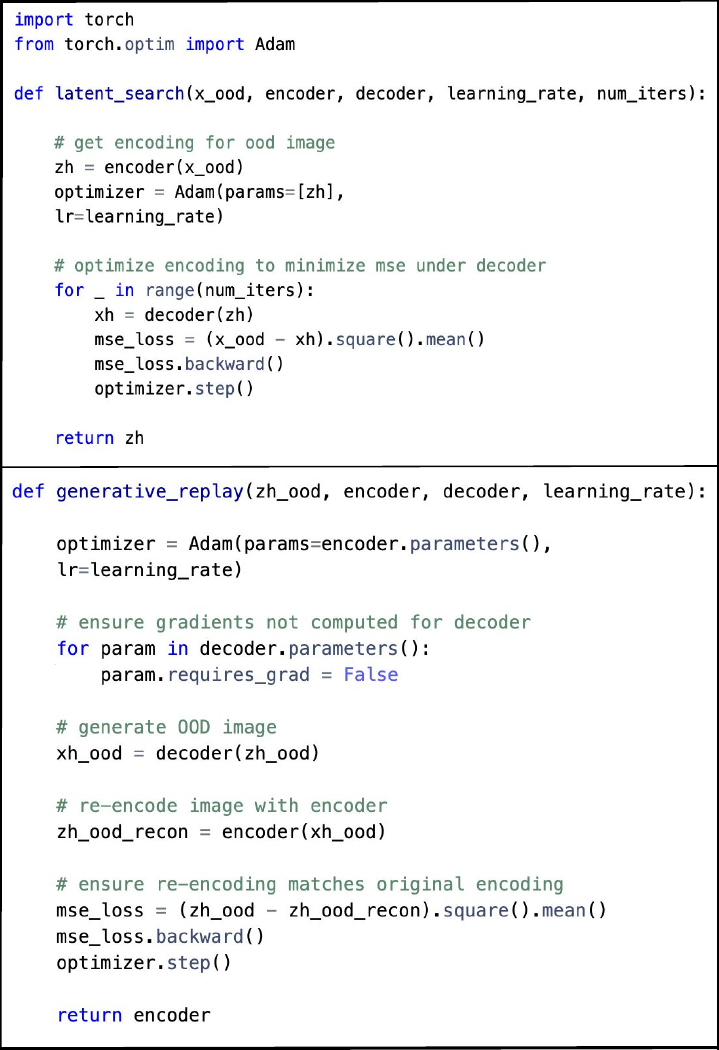}
    \caption{Top. PyTorch pseudocode for gradient-based search using a decoder for a given OOD image $\xb$~\cref{sec:gb_search}. Bottom. PyTorch pseudocode for one gradient step of generative replay~\cref{sec:gen_replay}.}
    \label{fig:pseudocode}
\end{figure*}

\clearpage
\section{Additional Experiments}
\label{app:more_exp}
In this appendix, we report results for compositional generalization when using decoders with less constrained inductive biases. 

\subsection{Unconstrained Transformer Decoder}
\begin{table}[!h]
\centering
\caption{ID and OOD accuracy (\%) across different architectures.}
\footnotesize
\setlength{\tabcolsep}{4pt}
\renewcommand{\arraystretch}{1.05}

\begin{tabular}{lcc}
\toprule
\textbf{Architecture} & \textbf{ID} $\uparrow$ & \textbf{OOD} $\uparrow$ \\
\midrule
Constrained Decoder              & 99.9 \sd{0.04}  & 83.43 \sd{0.87} \\
Flexible Decoder                 & 99.77 \sd{0.05} & 78.53 \sd{2.37} \\
Encoder-only (Slot Attention)   & 99.8 \sd{0.2}   & 0.002 \sd{0.002} \\
Encoder-only (Transformer)      & 100.0 \sd{0.0}  & 0.008 \sd{0.006} \\
\bottomrule
\end{tabular}

\label{tab:architecture_results}
\end{table}
\textbf{Setup.}
We first train a Transformer decoder that uses self-attention between patches and removes attention regularization. The model is trained with the same VAE loss as all prior models. We train this model on PUG-Background and report OOD results after leveraging replay.

\textbf{Results.}
In~\cref{tab:architecture_results}, we observe comparable OOD performance when using this decoder compared to our more constrained Transformer decoder. This result provides evidence that generative methods may also be capable of achieving compositional generalization by relying on more implicit inductive biases from e.g. the dynamics of optimization. In contrast, we observe that non-generative methods fail to generalize OOD through such implicit mechanisms.

\subsection{Slot-based vs.\ non-slot-based decoder}

\begin{wrapfigure}
{r}{0.47\textwidth}
    \centering
    \includegraphics[width=0.38\textwidth]{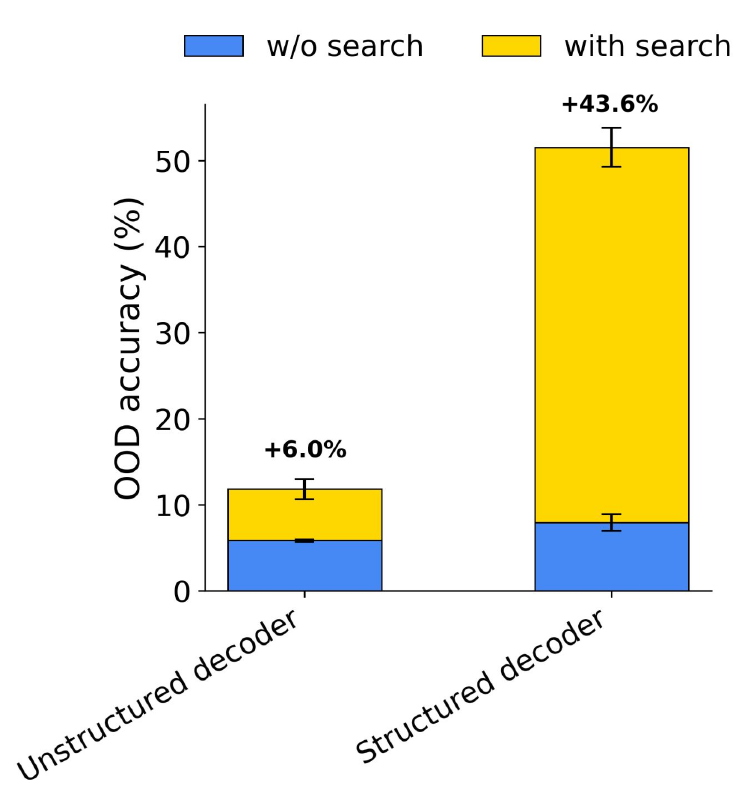}
    \caption{OOD accuracy on PUG-Background after applying search~\cref{sec:gb_search} using a structured slot-based Transformer decoder designed according to~\cref{sec:theory} vs. an unstructured CNN decoder.}
    \label{fig:gen_diff_setup}
\end{wrapfigure}
\textbf{Setup.} In this section, we test to what extent a decoder which operates on a monolithic latent vector opposed to slots can generalize compositionally. To test this, we train an autoencoder on PUG-Background with the following structure: images are mapped by a CNN encoder to a latent representation with the same total dimensionality as our slot-based model, but which does not have a slot structure. This latent is then decoded by an CNN decoder which is not slot-based. We train this model on PUG-Background across 3 seeds using the same training settings as~\cref{sec:experiments} and with a VAE loss with $\beta=0.005$. For the CNN encoder and decoder, we leverage the same architecture used in~\citet{dittadi2021generalization}. To evaluate classification performance, we train 3 separate linear readouts on the latents of this model to predict each animals and background. To evaluate whether this decoder offers gains in OOD performance, we perform 750 iterations of search~\cref{sec:gb_search} under the trained decoder and re-evaluate OOD accuracy. We benchmark the performance of this unstructured model against our structured Transformer decoder from~\cref{sec:experiments} which uses a ViT-Small base encoder trained from scratch. Similarly, we perform 750 iterations of search under this decoder, and re-evaluate OOD performance.

\textbf{Results.} For both models, we observe near-perfect ID classification performance. Thus, similar to~\cref{sec:experiments}, we only report OOD performance. Results can be seen in~\cref{fig:gen_diff_setup}. We find that both models achieve similar OOD accuracy before leveraging search. However, after applying search, we observe significantly higher gains in OOD performance when using the structured Transformer decoder compared to the unstructured CNN. These results suggest that non-trivial gains in compositional generalization may required a slot-based structure on a decoder.

\section{Extended Discussion}
\label{app:extended_discussion}
\textbf{More complex datasets.}
One limitation of our experimental study is that we do not evaluate compositional generalization on real-world image datasets. The main issue in leveraging such datasets is that evaluating compositional generalization requires data in which all ground-truth latent factors in each image (e.g. objects, backgrounds, textures, etc.) are explicitly known, thereby enabling us to construct ID and OOD splits in a principled way. For real-world, unstructured data the underlying latents are unknown making such datasets unsuitable for rigorously testing compositional generalization. Consequently, we elected to use PUG, which to the best of our knowledge is the most visually complex data in which the underlying latents are known. Yet, this dataset is still limited in its visual complexity relative to real world data. We thus believe that an important direction for future work is to create a large scale, visually realistic image dataset which offers access to the ground-truth latents, perhaps by leveraging a synthetic or neural network based image renderer.

\textbf{Compositional generalization via data scale.}
A key result of our experiments in~\cref{sec:experiments} is that non-generative methods leveraging encoders with large-scale pretraining achieve substantial gains in OOD performance. This suggests that, as the pretraining size of the base encoder increases, the full encoder (base encoder plus slot encoder) becomes initialized in a region of the loss landscape where all reachable optima correspond to models that also generalize OOD. A limitation of our current experiments is that we cannot investigate this phenomenon in a more systematic manner, since we rely on pretrained encoders that differ in architecture, training objective, and pretraining dataset. A more principled approach would require an empirical study in which an encoder's architecture and training objective are held fixed, while the model is trained on gradually increasing amounts of pretraining data. We leave such an investigation as an important direction for future work.

\textbf{Computational cost of generative methods.} One drawback of leveraging generative methods is that training a decoder as well as using search or replay add computational overhead relative to encoder-only methods. 
For example, in our experiments in~\cref{sec:empir_res}, a single iteration for a supervised ViT-Small base encoder took $\sim55$ ms for a batch size of 32, while adding a Transformer decoder and training with a VAE loss increased this time to $\sim366$ ms. This overhead can be reduced by only decoding a subset of pixels similar to~\citet{van2024moving}, but we leave this for future work. Additionally, using generative replay incurs approximately $\sim170$ ms per iteration, while search requires $\sim215$ ms with a batch size of 32. We note that we train for only 15,000 replay iterations, making this procedure relatively computationally scalable. Search, on the other hand, requires 750 iterations per batch. Thus, further work is needed to scale this procedure more effectively.

\textbf{Limitations of assumptions in theory.}
Our theoretical results in~\cref{sec:theory}, rely on two main assumptions on the ground-truth generator $\fb$: (i) that $\fb$ is a diffeomorphism and (ii) that the interactions between slots under $\fb$ are restricted according to~\cref{eq:interact_func}. While these assumptions are the most general which have been shown to enable compositional generalization, it is possible that they may fail to accurately model real-world data in certain circumstances. For example, the assumption that $\fb$ is a diffeomorphism may not hold for images in which objects exhibit strong occlusions such that $\fb$ is no longer invertible. Additionally, it is possible that certain interactions between concepts may not be captured by~\cref{eq:interact_func} such as complex shadows or reflections.

\textbf{Limitations of assumptions in practice.} Enforcing our theoretical assumptions on a model in practice can potentially pose scalability challenges. For example, enforcing that a model is a diffeomorphism via an autoencoder is computationally costly and thus can be challenging to scale. We note, however, that modern generative diffusion models and classifiers~\citep{rombach2022high,peebles2023scalable,jaini2024intriguing} rely on learning a diffeomorphic mapping between image and latent space, highlighting the feasibility of enforcing this assumption at scale. Furthermore, enforcing that a decoder takes the form of~\cref{eq:interact_func} exactly is not scalable in practice~\citep{brady2025interaction}. To this end, we aim to approximate this form in our experiments using a structured Transformer decoder and restricting interactions via a KL penalty and a sparsity regularizer on the decoder's attention weights~\citep{brady2025interaction}. Importantly, we find this model yields significant gains for compositional generalization highlighting that enforcing that a decoder exactly matches~\cref{eq:interact_func} may not be necessary in practice. Furthermore, we suspect that biologically plausible regularizers such as in~\citet{whittington2023disentanglement} which aim to minimize activation energy through sparsity regularizers on the latents and the decoder's weight matrix could serve a similar role as our regularizers.
\end{document}